\newcommand\publicationtext{%
  \scriptsize This article has been accepted for publication in IEEE Transactions on Robotics. It is accessible via the following DOI link: https://doi.org/10.1109/TRO.2025.3567529. This is the author's version which may differ from the final publication.}
\newcommand\publicationnotice{%
  \begin{tikzpicture}[remember picture,overlay]
    \node[anchor=north,draw=none,yshift=2pt] at (current page.north) {\parbox{\dimexpr0.8\textwidth-\fboxsep-\fboxrule\relax}{\publicationtext}};
  \end{tikzpicture}%
}
\newcommand\copyrighttext{%
  \scriptsize \textcopyright 2025 IEEE. Personal use of this material is permitted. Permission from IEEE must be obtained for all other uses, in any current or future media, including reprinting/republishing this material for advertising or promotional purposes, creating new collective works, for resale or redistribution to servers or lists, or reuse of any copyrighted component of this work in other works.}
\newcommand\copyrightnotice{%
  \begin{tikzpicture}[remember picture,overlay]
    \node[anchor=south,draw=none,yshift=-2pt] at (current page.south) {\parbox{\dimexpr0.8\textwidth-\fboxsep-\fboxrule\relax}{\copyrighttext}};
  \end{tikzpicture}%
}
\definecolor{orcidlogocol}{HTML}{A6CE39}
\tikzset{
  orcidlogo/.pic={
    \fill[orcidlogocol] svg{M256,128c0,70.7-57.3,128-128,128C57.3,256,0,198.7,0,128C0,57.3,57.3,0,128,0C198.7,0,256,57.3,256,128z};
    \fill[white] svg{M86.3,186.2H70.9V79.1h15.4v48.4V186.2z}
                 svg{M108.9,79.1h41.6c39.6,0,57,28.3,57,53.6c0,27.5-21.5,53.6-56.8,53.6h-41.8V79.1z M124.3,172.4h24.5c34.9,0,42.9-26.5,42.9-39.7c0-21.5-13.7-39.7-43.7-39.7h-23.7V172.4z}
                 svg{M88.7,56.8c0,5.5-4.5,10.1-10.1,10.1c-5.6,0-10.1-4.6-10.1-10.1c0-5.6,4.5-10.1,10.1-10.1C84.2,46.7,88.7,51.3,88.7,56.8z};
  }
}
\newcommand{\norm}[1]{\lVert#1\rVert}
\newcommand\orcidicon[1]{\href{https://orcid.org/#1}{\mbox{\scalerel*{
\begin{tikzpicture}[yscale=-1,transform shape]
\pic{orcidlogo};
\end{tikzpicture}
}{|}}}}
\newcommand{\redsout}{\bgroup\markoverwith{\textcolor{red}{\rule[0.3ex]{2pt}{1pt}}}\ULon}
\algrenewcommand\alglinenumber[1]{\footnotesize\textbf{#1}}
\newcommand{\StateNoIndent}{\Statex\hspace{-\algorithmicindent}}
\newtheorem{theorem}{Theorem}
\newtheorem{corollary}{Corollary}
\newtheorem{assumption}{Assumption}
\newtheorem{proposition}{Proposition}
\newtheorem{remark}{Remark}
\theoremstyle{definition}
\theoremstyle{property}
\newtheorem{property}{Property}
\title{Embedded Hierarchical MPC for Autonomous Navigation}
\author{Dennis Benders$^{\orcidicon{0000-0002-6648-7128}}$, Johannes K\"{o}hler$^{\orcidicon{0000-0002-5556-604X}}$, Thijs Niesten$^{\orcidicon{0009-0005-1658-3110}}$, Robert Babu\v{s}ka$^{\orcidicon{0000-0001-9578-8598}}$,\\ Javier Alonso-Mora$^{\orcidicon{0000-0003-0058-570X}}$, and Laura Ferranti$^{\orcidicon{0000-0003-3856-6221}}$%
\thanks{This work was supported by the National Police of the Netherlands. All content represents the opinion of the author(s), which is not necessarily shared or endorsed by their respective employers and/or sponsors. Laura Ferranti received support from the Dutch Science Foundation NWO-TTW Foundation within the Veni project HARMONIA (nr. 18165). Johannes K\"ohler was supported by the Swiss National Science Foundation under NCCR Automation (grant agreement 51NF40 180545).}%
\thanks{Dennis Benders, Thijs Niesten, Robert Babu\v{s}ka, Javier Alonso-Mora and Laura Ferranti are with the department of Cognitive Robotics, Delft University of Technology, 2628 CD Delft, The Netherlands (email: \{d.benders, t.e.j.niesten, r.babuska, j.alonsomora, l.ferranti\}@tudelft.nl).}%
\thanks{Robert Babu\v{s}ka is also with the Czech Institute of Informatics, Robotics and Cybernetics, Czech Technical University in Prague.}%
\thanks{Johannes K\"{o}hler is with the Institute for Dynamic Systems and Control, ETH Z\"{u}rich, CH-8092, Switzerland (email: jkoehle@ethz.ch).}%
}
\begin{document}

\maketitle

\begin{abstract}
To efficiently deploy robotic systems in society, mobile robots must move autonomously and safely through complex environments. Nonlinear model predictive control (MPC) methods provide a natural way to find a dynamically feasible trajectory through the environment without colliding with nearby obstacles. However, the limited computation power available on typical embedded robotic systems, such as quadrotors, poses a challenge to running MPC in real time, including its most expensive tasks: constraints generation and optimization. To address this problem, we propose a novel hierarchical MPC scheme that consists of a planning and a tracking layer. The planner constructs a trajectory with a long prediction horizon at a slow rate, while the tracker ensures trajectory tracking at a relatively fast rate. We prove that the proposed framework avoids collisions and is recursively feasible. Furthermore, we demonstrate its effectiveness in simulations and lab experiments with a quadrotor that needs to reach a goal position in a complex static environment. The code is efficiently implemented on the quadrotor's embedded computer to ensure real-time feasibility. Compared to a state-of-the-art single-layer MPC formulation, this allows us to increase the planning horizon by a factor of 5, which results in significantly better performance.
\end{abstract}

\begin{IEEEkeywords}
Embedded Autonomous Mobile Robots, Real-time Motion Planning and Tracking, Hierarchical Model Predictive Control, Obstacle Avoidance
\end{IEEEkeywords}

\section*{Supplementary material}
The open-source implementation of this work is available at https://github.com/dbenders1/hmpc.

The video accompanying this work can be found at https://youtu.be/0RnrKk6830I.

\publicationnotice
\copyrightnotice
  
\section{Introduction}\label{sec:introduction}
\IEEEPARstart{A}{utonomous} mobile robots play an increasingly important role in our society \cite{siegwart2011introduction}. The application domains are widespread, including self-driving cars \cite{paden2016survey} and environment exploration in search and rescue operations \cite{delmerico2019current}. To successfully perform its task in such applications, a robot typically needs to navigate through a complex environment and reach a goal \cite{brito2021go,lodel2022look}. This requires solving the motion planning and control problem: the robot needs to plan a smooth, collision-free, and dynamically feasible trajectory to avoid unnecessary braking and remain safe.

\begin{figure}[t]
    \centering
    \includegraphics[width=\columnwidth]{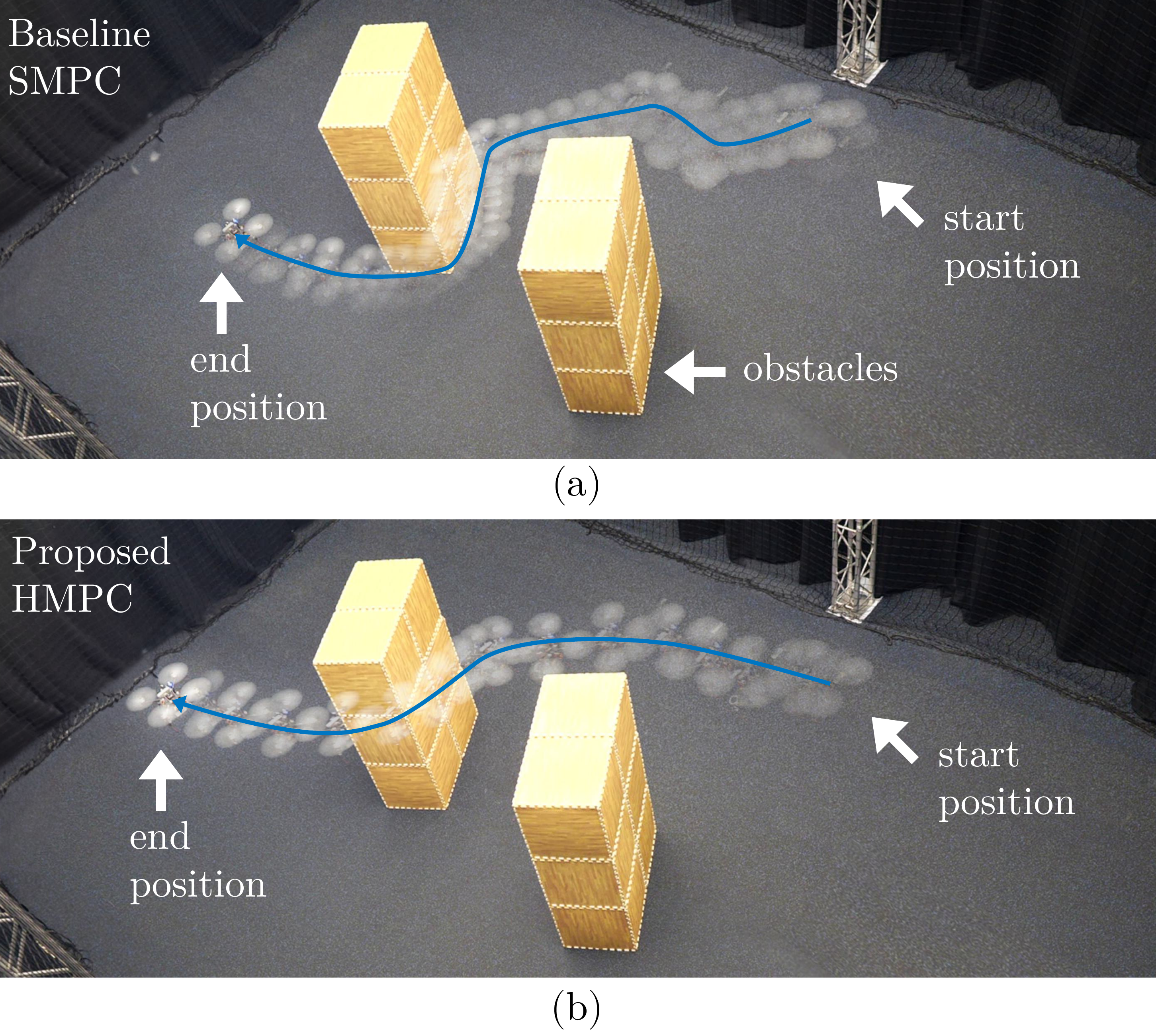}
    \caption{The closed-loop trajectory of (a) the baseline single-layer MPC (SMPC) scheme and (b) the proposed hierarchical MPC (HMPC) scheme. Whereas SMPC solves the planning and tracking tasks in a single-layer formulation, HMPC decouples the tasks using a planning MPC and a tracking MPC that use the same nonlinear model but run at different frequencies. SMPC uses the same nonlinear model as PMPC and TMPC and is sampled at the same rate as TMPC to ensure reliable operation. Both schemes fly from start to goal position and avoid the obstacles. Compared to SMPC, HMPC has a significantly longer planning horizon, given the limited computational resources on the onboard computer, and does not have to balance planning and tracking tasks. Therefore, HMPC reaches the goal faster, is less sensitive to model mismatch, and maintains altitude better than SMPC.}
    \label{fig:smpc_hmpc_front}
\end{figure}

A natural way to solve this problem is to leverage nonlinear model predictive control (MPC) \cite{rawlings2017model}. MPC is an effective method that, based on a nonlinear system model and an online observed environment, can optimize for a smooth, dynamically feasible, and collision-free trajectory. Using MPC, one can incorporate model, actuator, system, and obstacle avoidance constraints in a straightforward way, which, if constructed properly, provides safety guarantees of the closed-loop system \cite{kamel2017linear,romero2022model,santos2023nonlinear}.\looseness=-1

A common approach is to use a \underline{single-layer} MPC scheme that executes trajectory planning and tracking as a single optimization problem \cite{neunert2016fast,kamel2017linear,brito2019model,de2021scenario,saccani2022multitrajectory,romero2022model,santos2023nonlinear}. However, the main drawback of this approach is the time to solve the problem, which increases with increasing prediction horizon (for long-term predictions) and decreasing sampling times (for fast feedback), meaning that the user has to make a trade-off between long-term planning and accurate tracking. This trade-off leads to suboptimal solutions, especially on mobile robots that carry embedded hardware with limited computing power.

To overcome this problem, recent works consider decoupling the planning and tracking tasks in the form of a \underline{hierarchical} MPC (HMPC) scheme, where a planning MPC (PMPC) computes a long-term trajectory, which is tracked by a fast tracking MPC (TMPC) \cite{falcone2008hierarchical,gray2012predictive,liniger2015optimization,elsayed2023generic,ibrahim2019hierarchical,kogel2023safe}. In HMPC, the PMPC must generate a dynamically feasible trajectory that the TMPC is able to track. Although necessary for reliable operation, some of the mentioned works do not consider the co-design of PMPC and TMPC, i.e., designing the PMPC based on the tracking capabilities of the TMPC. Furthermore, to ensure runtime feasibility, these works either consider linearizing the nonlinear model or optimizing a plan from a pre-defined set of motion primitives, thereby limiting the maneuverability of the nonlinear system.

To address both problems, we propose a novel HMPC framework. In this framework, PMPC and TMPC use the same nonlinear robot model but are sampled at different rates. This allows planning a dynamically feasible trajectory far ahead and tracking it accurately; see Figure~\ref{fig:smpc_hmpc_front}. The TMPC and PMPC are co-designed so that we can formally prove collision avoidance and recursive feasibility for the overall HMPC framework in static environments. Furthermore, we show the runtime feasibility of the framework on embedded hardware.\looseness=-1

\subsection{Related work}\label{subsec:related_work}
The problem of planning and tracking a trajectory for a mobile robot has been addressed in the literature using different methods. Popular planning methods include reactive \cite{khatib1986real,fox1997dynamic}, sampling-based \cite{karaman2011sampling}, optimization-based \cite{tordesillas2021faster} or a combination of sampling and optimization-based \cite{richter2016polynomial} methods. The planner output can take various forms of plans, including paths \cite{quan2021eva}, splines \cite{zhou2019robust}, and time-parameterized kinodynamic trajectories \cite{tordesillas2021faster}. A general overview of motion planning methods can be found in \cite{paden2016survey,zhou2022review,orthey2023sampling}.

Depending on the type of plan, various tracking methods might be preferred. An overview of trajectory tracking methods on road and aerial vehicles is given in \cite{li2021state} and \cite{papachristos2018modeling}, respectively. Popular methods to design tracking controllers are proportional-integral-derivative (PID) \cite{bouabdallah2004pid}, linear-quadratic regulator (LQR) \cite{foehn2018onboard}, sum-of-squares (SOS) \cite{tobenkin2011invariant} and MPC \cite{liniger2015optimization}.

Most of the methods above focus on either the planning or the tracking task and do not consider the tracker capabilities in the planner. Consequently, they cannot guarantee the satisfaction of input and state constraints for general nonlinear robotic systems. Therefore, we consider works describing the planner and tracker co-design in the remainder of this section. We first discuss \textit{general planner-tracker schemes} that do not use MPC in both the planner and tracker. These schemes can be separated into approaches that perform planning \textit{offline} and \textit{online}. Afterward, we focus on existing \textit{HMPC} approaches.

\subsubsection{Planning offline}\label{subsubsec:planning_offline}
Offline planning is useful in static scenarios such as car racing, \cite{vazquez2020optimization} quadrotor racing \cite{foehn2021time}, and industrial ground robot applications \cite{li2021design}.

In the case of racing, there is usually no co-design of planner and tracker, and the assumption is that the planner and tracker are designed suitably and that the tracker is sampled at a sufficiently high frequency that it can accurately track the plan. Recent MPC schemes with tracking guarantees can be found in \cite{numerow2024inherently,krinner2024mpcc}. However, ensuring recursive feasibility requires restrictive terminal equality constraints.

In \cite{li2021design}, the authors show the performance of a hierarchical scheme in which a trajectory is planned and the velocity is smoothed offline, and the plan is tracked with MPC online on an industrial autonomous ground vehicle (AGV).

It should be noted that a significant number of works in the offline planning field show embedded hardware results since it is easier to obtain runtime feasibility for these algorithms compared to online planning methods. However, they can only be applied to solve a pre-defined task in a known environment.

\subsubsection{Planning online}
Online planning is relevant in scenarios where the perceived environment information of the robot changes during runtime. The main challenges of online planning methods are ensuring runtime feasibility and constructing a plan such that the closed-loop system avoids collisions despite deviating from the plan. Three relevant approaches to quantify this deviation are \textbf{Lyapunov-based}, \textbf{Hamilton-Jacobi (HJ) reachability-based}, and \textbf{contraction-based} methods.

Common \textbf{Lyapunov-based methods} design funnels around offline-designed motion primitives \cite{tedrake2010lqr,majumdar2017funnel} from which the composable ones are selected during runtime \cite{burridge1999sequential}. The disadvantage is that the robot is limited to the set of motion primitives, which can lead to conservative plans. This problem is addressed in recent works using parameterized funnels \cite{kousik2020bridging} or a combination of sampling and invariant set analysis \cite{m2023pip}. Other relevant Lyapunov-based methods include explicit reference governors (ERGs) \cite{nicotra2018explicit} and control barrier functions (CBFs) \cite{ames2016control,rosolia2022unified,csomay2022multi}.

One of the possible applications of \textbf{HJ reachability-based methods} is to derive a tracking error bound (TEB) between a low-fidelity planning and high-fidelity tracking model in order to reduce planning computation time \cite{falcone2008hierarchical,chen2021fastrack}. However, the TEB is computationally expensive and can be overly conservative. To address these problems, \cite{althoff2015online} reduces computation time by linearizing around operating points and constructs corresponding zonotopic reachable sets, and \cite{fridovich2018planning} reduces conservatism by switching between a non-conservative ‘slow’ plan and a conservative ‘fast’ plan.

\textbf{Contraction-based methods} are based on contraction theory, which concludes about the convergence of two trajectories, the planned and closed-loop trajectories, based on the evolution of two infinitesimally close trajectories \cite{lohmiller1998contraction}. The approach in \cite{singh2023robust} leverages contraction theory and SOS programming to generate a control contraction metric (CCM) controller with a corresponding funnel around the trajectory but suffers from expensive computation time, both offline and online, and a conservative plan. The method proposed in \cite{singh2020robust} combines the approaches in \cite{singh2023robust} and \cite{chen2021fastrack} to reduce offline computations compared to \cite{chen2021fastrack} and online planning time compared to \cite{singh2023robust}.

\subsubsection{HMPC}
Similar to general planner-tracker methods, HMPC schemes also suffer from long planning computation times and propose similar solutions: using a simpler model or optimizing from a pre-defined set of motion primitives.

Following a similar idea to \cite{chen2021fastrack}, the authors in \cite{falcone2008hierarchical} use a low-fidelity nonlinear PMPC model and a high-fidelity linear time-varying TMPC from \cite{falcone2007predictive}. Other works leverage a linear PMPC model to reduce computation time. For example, \cite{ibrahim2019hierarchical} shows how to efficiently use a linear mixed-integer PMPC and nonlinear TMPC to cover an area with a quadrotor in simulation.

Optimizing based on a set of motion primitives can be done via mixed-integer programming. This approach was taken in \cite{gray2012predictive}, demonstrating results in simulation and on a real passenger car driving on a slippery surface.

Other approaches combine a linear model and a set of motion primitives. One of these works is \cite{liniger2015optimization}, in which the authors implement a runtime feasible HMPC scheme on small race cars by linearizing the nonlinear dynamics around the previously optimal trajectory and selecting trajectories from a pre-defined set of motion primitives in the PMPC. Another work is \cite{elsayed2023generic}, in which simulation results are shown of a PMPC optimizing from a set of motion primitives and aggressive modes, and a TMPC being a tube-based formulation.

To avoid limited maneuverability, the authors in \cite{kogel2023safe} propose a mixed-integer PMPC formulation that explicitly considers the distance to obstacles based on the aggressiveness mode and a TMPC being a cyclic horizon MPC. Quadrotor simulation results demonstrate that the method can safely fly from start to goal in a known static environment. However, the PMPC computation time is already at its limit for real-time feasibility while only using a linear model, and it was not implemented on embedded hardware.

In conclusion, designing an HMPC that computes a plan using the nonlinear robot dynamics online and accurately tracks it in real time on a real robot with limited computation power is challenging. The methods above solve this problem by reducing the model complexity or optimizing a plan using a set of pre-defined motion primitives. In contrast, our method uses the same nonlinear model in both MPC layers and guarantees tracking of the planned trajectory, similar to the idea of decoupling PMPC and TMPC while maintaining recursive feasibility presented in \cite{kohler2024analysis}. Leveraging recently developed MPC tracking \cite{kohler2020nonlinear} that guarantees computationally efficient tracking performance and recursive feasibility, we propose an HMPC framework for a mobile robot navigating in a static environment. The framework ensures collision avoidance and recursive feasibility, thereby taking an important step toward closing the gap between real-world mobile robot deployment and theoretical MPC feasibility and tracking guarantees.

\subsection{Contributions}\label{subsec:contributions}
The main contribution of this paper is a novel HMPC framework to enable embedded motion planning and tracking on a mobile robot. In particular, we propose:
\begin{enumerate}[label=(\Alph*)]
    \item a combined design methodology for PMPC and TMPC leveraging the same nonlinear model, in which the PMPC constructs a dynamically feasible trajectory that the TMPC is guaranteed to track by tightening the constraints in accordance with the offline terminal ingredients design of the TMPC. The proposed PMPC can solve the long-horizon planning problem completely independent of the TMPC over a longer, user-chosen time interval;
    \item theoretical guarantees that both PMPC and TMPC are recursively feasible and the HMPC framework avoids static obstacles in complex environments at all times;
    \item an efficient way to implement the convex free space decomposition scheme presented in \cite{liu2017planning} for an MPC-based planner.
\end{enumerate}

While the framework can be used on any mobile robotic platform, we implemented it on a quadrotor's NVIDIA Jetson Xavier NX embedded computer. Simulations and experiments validate the theoretical guarantees and practicality of the proposed HMPC framework.

The code base, written in C++ as a Robot Operating System (ROS) package, implements different MPC formulations (i.e., single-layer MPC, TMPC, and PMPC) and solves these problems leveraging the ForcesPro NLP solver \cite{FORCESPro, FORCESNLP}. The only change in code for the different simulations and experiments is the interface, which allows for easier debugging and enhances sim-to-real transfer. The package, together with simulation and analysis code, is contained in a Docker environment, in line with the recommendation in \cite{cervera2018try}, and is available at https://github.com/dbenders1/hmpc.

\subsection{Outline}\label{subsec:outline}
The rest of the paper is organized as follows: Section~\ref{sec:problem_formulation} describes the considered robot and environment. Furthermore, it formulates the general trajectory planning and tracking problem and explains how the hierarchical framework follows from this scheme. Section~\ref{sec:tmpc} presents the TMPC design, including terminal ingredients (terminal cost and set) design with corresponding tracking properties. Section~\ref{sec:pmpc} provides the PMPC design to construct a reference trajectory and the corresponding obstacle avoidance constraints for the TMPC to ensure recursive feasibility and obstacle avoidance of the closed-loop system. The overall HMPC framework is described in Section~\ref{sec:hmpc}, including concise pseudo-algorithms, the theoretical analysis of the framework and its components with proofs given in Appendix~\ref{app:theorem_tmpc} and~\ref{app:theorem_pmpc}, and a summary of the theoretical properties that the framework provides. Section~\ref{sec:results} provides the implementation details and experimental results. Finally, Section~\ref{sec:conclusion} concludes the paper.

\subsection{Notation}\label{subsec:notation}
$\mathbb{N}_{[a,b)}$ represents the natural numbers between $a$ and $b$, including $a$ and excluding $b$. Vectors are expressed in bold, matrices in capitals. $\norm{\cdot}$ represents the 2-norm, and the quadratic vector norm for a positive definite matrix $Q$ is written as $\norm{\bm{x}}_Q^2 = \bm{x}^\top Q \bm{x}$, $Q \succ 0$. All subscripts relate to time or indices of vectors/matrices, the rest of the information is given as superscript, i.e., $\bm{x}_t^\mathrm{p}$ represents the state at time $t$ for PMPC $\mathrm{P}$. The superscripts are omitted if the meaning is clear from the context. For the MPC problem at time $t$, the predicted state-input pair is denoted by $(\bm{x}_{\tau|t},\bm{u}_{\tau|t})$, and its optimal solution by $(\bm{x}_{\tau|t}^*,\bm{u}_{\tau|t}^*)$ with predicted time $\tau \in [0,T]$ and prediction horizon $T$. Finally, $\bm{x}_{\cdot|t}$ represents all states over the prediction horizon.\looseness=-1

\section{Problem formulation}\label{sec:problem_formulation}
\subsection{Robot description}\label{subsec:problem_formulation_robot_description}
The mobile robot system is described by the following Lipschitz continuous nonlinear dynamics:
\begin{equation}\label{eq:state_update}
    \dot{\bm{x}}_t = f(\bm{x}_t,\bm{u}_t),
\end{equation}
with states $\bm{x}_t \in \mathbb{R}^{n^\mathrm{x}}$, inputs $\bm{u}_t \in \mathbb{R}^{n^\mathrm{u}}$, time $t \in \mathbb{R}$, and continuous-time dynamics $f(\bm{x}_t,\bm{u}_t)$. The system constraints are given by a compact polytopic set $\mathcal{Z}$:
\begin{align}\label{eq:sys_constraints}
    \mathcal{Z} &\coloneqq \mathcal{X} \times \mathcal{U} \notag\\
    &= \left\{(\bm{x},\bm{u}) \in \mathbb{R}^{n^\mathrm{x}+n^\mathrm{u}} | g_{j}^\mathrm{s}(\bm{x},\bm{u}) \leq 0, \ j \in \mathbb{N}_{[1,n^\mathrm{s}]}\right\},
\end{align}
where $g_{j}^\mathrm{s}(\bm{x},\bm{u}) = L_{j}^\mathrm{s} \begin{bmatrix}\bm{x}\\\bm{u}\end{bmatrix} - \bm{l}_{j}^\mathrm{s}, L_{j}^\mathrm{s} \in \mathbb{R}^{1 \times (n^\mathrm{x}+n^\mathrm{u})}, \bm{l}_{j}^\mathrm{s} \in \mathbb{R}, j \in \mathbb{N}_{[1,n^\mathrm{s}]}$.

\subsection{Obstacle avoidance constraints formulation}\label{subsec:problem_formulation_obst_constraints}
Without loss of generality, the position states of a mobile robot are given by $\bm{p} = C \bm{x} \in \mathbb{R}^{n^\mathrm{p}}$, with dimensions depending on the robot configuration space, where $C$ is a matrix selecting the corresponding states from $\bm{x}$. Position $\bm{p}$ is used to formulate obstacle avoidance constraints, ensuring that the robot region $\mathcal{R}$ does not intersect with obstacle regions $\mathcal{O}$. The obstacle avoidance constraints at time $t$ are given by the following polytopic free space set:
\begin{equation}\label{eq:obst_constraints_general}
    \mathcal{F}_{\tau|t} \coloneqq \left\{\bm{p}_{\tau|t}\ \middle|\ g_{j,\tau|t}^\mathrm{o}(\bm{p}_{\tau|t}) \leq 0,\ j \in \mathbb{N}_{[1,n^\mathrm{o}]}\right\},
\end{equation}
where $g_{j,\tau|t}^\mathrm{o}(\bm{p}_{\tau|t}) = L_{j,\tau|t}^\mathrm{o} \bm{p}_{\tau|t} - \bm{l}_{j,\tau|t}^\mathrm{o}, L_{j,\tau|t}^\mathrm{o} \in \mathbb{R}^{1 \times n^\mathrm{p}}$, $\bm{l}_{j,\tau|t}^\mathrm{o} \in \mathbb{R},$ $j \in \mathbb{N}_{[1,n^\mathrm{o}]}$. Without loss of generality, we set $\norm{L_{j,\tau|t}^\mathrm{o}} = 1, j \in \mathbb{N}_{[1,n^\mathrm{o}]}$ by scaling the constraints.

Section~\ref{subsec:pmpc_obst_constraints} provides details on constructing the obstacle avoidance constraints.

\subsection{Trajectory planning and tracking}\label{subsec:problem_formulation_planning_tracking}
The problem we aim to solve is to generate a dynamically feasible and collision-free trajectory for a mobile robot navigating in a static environment while considering the limited computation resources of the robot's embedded computer. To solve this problem, we rely on MPC. The single-layer MPC scheme to solve this problem is given by:
\begin{subequations}\label{eq:mpc}
    \begin{alignat}{2}
        \underset{\substack{\bm{x}_{\cdot|t},\bm{u}_{\cdot|t}}}{\operatorname{min}}\ \ &\mathcal{J}(\bm{x}_{\cdot|t},\bm{u}_{\cdot|t},\bm{p}^\mathrm{g}) \label{eq:mpc_obj}\\
        \operatorname{s.t.}\ &\bm{x}_{0|t} = \bm{x}_t,&& \label{eq:mpc_state_init}\\
        &\dot{\bm{x}}_{\tau|t} = f(\bm{x}_{\tau|t},\bm{u}_{\tau|t}),&&\ \tau \in [0,T] \label{eq:mpc_state_update}\\
        &(\bm{x}_{\tau|t},\bm{u}_{\tau|t}) \in \mathcal{Z},&&\ \tau \in [0,T] \label{eq:mpc_sys_constraints}\\
        &\bm{p}_{\tau|t} \in \mathcal{F}_{\tau|t},&&\ \tau \in [0,T] \label{eq:mpc_obst_constraints}\\
        &\bm{x}_{T|t} \in \mathcal{X}^\mathrm{f}, \label{eq:mpc_term_set}
    \end{alignat}
\end{subequations}
which optimizes a trajectory by minimizing cost function $\mathcal{J}(\bm{x}_{\cdot|t},\bm{u}_{\cdot|t},\bm{p}^\mathrm{g})$ \eqref{eq:mpc_obj}, while satisfying system constraints \eqref{eq:mpc_sys_constraints} and obstacle avoidance constraints \eqref{eq:mpc_obst_constraints}, in order to steer the robot with dynamics \eqref{eq:mpc_state_update} from an initial state \eqref{eq:mpc_state_init} to a terminal set \eqref{eq:mpc_term_set} in the direction of goal position $\bm{p}^\mathrm{g}$.

As discussed in the introduction, such a single-layer MPC is challenging to implement on embedded hardware. Hence, our goal is to develop PMPC and TMPC formulations that, when combined, reach goal $\bm{p}^g$ while satisfying constraints \eqref{eq:mpc_state_init}-\eqref{eq:mpc_term_set}. PMPC and TMPC are combined so that (a) the TMPC is guaranteed to track the trajectory provided by the PMPC and (b) the overall scheme avoids collisions and is recursively feasible. In this setting, the PMPC minimizes a user-chosen planning cost, while the TMPC minimizes a tracking cost related to the reference generated by the PMPC. Both MPC formulations use the actual nonlinear dynamics \eqref{eq:state_update} for the prediction.

To achieve this goal, the main TMPC design challenge is computing terminal ingredients, including terminal cost in \eqref{eq:mpc_obj} and terminal set \eqref{eq:mpc_term_set}, such that the closed-loop system tracks the reference computed by the PMPC.

The main PMPC design challenges are (a) constructing initial state constraint \eqref{eq:mpc_state_init}, such that the communicated trajectory to the TMPC is continuous and dynamically feasible, (b) tightening the system and obstacle avoidance constraints, such that the robot does not collide with obstacles given the offline computed terminal set of the TMPC, and (c) choosing a proper terminal set ensuring recursive feasibility of the PMPC.

To summarize the co-design of PMPC and TMPC, Figure~\ref{fig:hmpc_system_diagram} provides an overview of the proposed HMPC framework. The framework is detailed in Section~\ref{sec:hmpc}.

\begin{figure}[!t]
    \centering
    \includesvg[inkscapelatex=false]{HMPC_system_diagram.svg}
    \caption{HMPC framework overview. In the offline phase, the terminal ingredients of the TMPC, including the terminal set, cost, and tightening constants, are computed. During online operation, the PMPC plans a trajectory based on goal position $\bm{p}^\mathrm{g}$ and grid map $\mathcal{M}$, and the TMPC tracks this trajectory using state feedback $\bm{x}$.}
    \label{fig:hmpc_system_diagram}
\end{figure}

\section{TMPC design}\label{sec:tmpc}
This section presents the TMPC formulation in Section~\ref{subsec:tmpc_formulation}. Section~\ref{subsec:tmpc_term_ing} describes the offline design of suitable terminal ingredients to prove recursive feasibility and convergence to the reference trajectory. Finally, Section~\ref{subsec:tmpc_properties} gives some properties that the reference trajectory and obstacle avoidance constraints generated by the PMPC should satisfy and formalizes the corresponding TMPC tracking properties in Proposition~\ref{prop:term_ing}.

\subsection{TMPC formulation}\label{subsec:tmpc_formulation}
The TMPC formulation is based on \cite{kohler2020nonlinear} with added obstacle avoidance capability:
\begin{subequations}\label{eq:tmpc}
    \begin{alignat}{2}
        \underset{\substack{\bm{x}_{\cdot|t},\bm{u}_{\cdot|t}}}{\operatorname{min}}\ \ &\mathrlap{\mathcal{J}^\mathrm{f,t}(\bm{x}_{T^\mathrm{t}|t}{-}\bm{x}_{T^\mathrm{t}|t}^\mathrm{r}) + \int_{\tau=0}^{T^\mathrm{t}} \mathcal{J}^\mathrm{s,t}(\bm{x}_{\tau|t},\bm{u}_{\tau|t},\bm{r}_{\tau|t})\ d\tau,}&&\hspace{200pt} \label{eq:tmpc_obj}\\
        \operatorname{s.t.}\ &\bm{x}_{0|t} = \bm{x}_t,&& \label{eq:tmpc_state_init}\\
        &\dot{\bm{x}}_{\tau|t} = f(\bm{x}_{\tau|t},\bm{u}_{\tau|t}),&&\ \tau \in [0,T^\mathrm{t}] \label{eq:tmpc_state_update}\\
        &(\bm{x}_{\tau|t},\bm{u}_{\tau|t}) \in \mathcal{Z},&&\ \tau \in [0,T^\mathrm{t}] \label{eq:tmpc_sys_constraints}\\
        &\bm{p}_{\tau|t} \in \mathcal{F}_{\tau|t},&&\ \tau \in [0,T^\mathrm{t}] \label{eq:tmpc_obst_constraints}\\
        &(\bm{x}_{T^\mathrm{t}|t}-\bm{x}_{T^\mathrm{t}|t}^\mathrm{r}) \in \mathcal{X}^\mathrm{f,t}, \label{eq:tmpc_term_set}
    \end{alignat}
\end{subequations}
with TMPC sampling time $T^\mathrm{s,t}$. This means that \eqref{eq:tmpc} is solved every $T^\mathrm{s,t}$ seconds.

The cost penalizes the error between the prediction and the reference trajectory $\bm{r} = [{\bm{x}^\mathrm{r}}^\top {\bm{u}^\mathrm{r}}^\top]^\top$ using stage and terminal costs:

\begin{align}
    \mathcal{J}^\mathrm{s,t}(\bm{x},\bm{u},\bm{r}) &\coloneqq \norm{\bm{x}-\bm{x}^\mathrm{r}}_Q^2 + \norm{\bm{u}-\bm{u}^\mathrm{r}}_R^2, \label{eq:stage_cost}\\
    \mathcal{J}^\mathrm{f,t}(\bm{x}{-}\bm{x}^\mathrm{r}) &\coloneqq \norm{\bm{x}-\bm{x}^\mathrm{r}}_P^2, \label{eq:term_cost}
\end{align}
with $Q, R, P \succ 0$. The optimal input to Problem~\eqref{eq:tmpc} is denoted by $\bm{u}_{\tau|t}^*$. As a result, the closed-loop system is given by:
\begin{equation}\label{eq:closed-loop}
    \bm{u}_{\tau+t} \coloneqq \bm{u}_{\tau|t-T^\mathrm{s,t}}^*, \quad \tau \in [0,T^\mathrm{s,t}].
\end{equation}

\subsection{Offline terminal ingredients design}\label{subsec:tmpc_term_ing}
\begin{figure*}[!t]
    \normalsize
    \begin{subequations}\label{eq:sdp}
        \begin{align}
            \underset{X,Y,{c_{j}^\mathrm{s}}^2}{\operatorname{min}}\ \ &-\operatorname{log}\operatorname{det}X + \sum_{j=1}^{n^\mathrm{s}} c_{j}^\mathrm{c} {c_{j}^\mathrm{s}}^2,\label{eq:sdp_cost}\\
            \operatorname{s.t.}\ &\begin{bmatrix}
                A(\bm{z}) X + B(\bm{z}) Y + \left(A(\bm{z}) X + B(\bm{z}) Y\right)^\top&\left(Q^\frac{1}{2} X\right)^\top&\left(R^\frac{1}{2} Y\right)^\top\\
                *&-I_{n^\mathrm{x}}&0\\
                *&0&-I_{n^\mathrm{u}}
            \end{bmatrix} \preceq 0,\ \forall \bm{z} = \begin{bmatrix}\bm{x}\\\bm{u}\end{bmatrix}\in \mathcal{Z},\label{eq:sdp_lmi_tracking}\\
            &\begin{bmatrix}
                {c_{j}^\mathrm{s}}^2&L_{j}^\mathrm{s} \begin{bmatrix}Y\\X\end{bmatrix}\\\left(L_{j}^\mathrm{s} \begin{bmatrix}Y\\X\end{bmatrix}\right)^\top&X
            \end{bmatrix} \succeq 0,\ j \in \mathbb{N}_{[1,n^\mathrm{s}]},\ X \succeq 0\label{eq:sdp_lmi_sys_constraints}.
        \end{align}
    \end{subequations}
\end{figure*}

The terminal ingredients are computed offline using the semidefinite program (SDP) \eqref{eq:sdp} with linearized system dynamics:\looseness=-1
\begin{equation}\label{eq:term_ing_A_B}
    A(\bm{z}) = \left.\frac{\partial f(\bm{x},\bm{u})}{\partial \bm{x}}\right|_{\bm{z}}, \quad B(\bm{z}) = \left.\frac{\partial f(\bm{x},\bm{u})}{\partial \bm{u}}\right|_{\bm{z}}.
\end{equation}

Based on the SDP solution, the terminal cost matrix $P$ and feedback law gain $K$ are computed using:
\begin{equation}\label{eq:P_K}
    P = X^{-1}, \quad K = Y P,
\end{equation}
and the quadratic terminal cost is an incremental Lyapunov function \cite{kohler2020nonlinear} defined in \eqref{eq:term_cost}.

The terminal set is defined as an ellipsoidal sublevel set of the incremental Lyapunov function given by:
\begin{equation}\label{eq:term_set}
    \mathcal{X}^\mathrm{f,t} \coloneqq \{\bm{x} \in \mathbb{R}^{n^\mathrm{x}}|~\norm{\bm{x}}_P^2 \leq \alpha^2\},
\end{equation}
with a user-chosen terminal set scaling $\alpha > 0$. $\mathcal{X}^\mathrm{f,t}$ is positive invariant for control input $\bm{u} = \kappa^\mathrm{f}(\bm{x},\bm{r})$ with:
\begin{equation}\label{eq:term_control_law}
    \kappa^\mathrm{f}(\bm{x},\bm{r}) \coloneqq \bm{u}^\mathrm{r} + K(\bm{x}{-}\bm{x}^\mathrm{r}),
\end{equation}
which is crucial to ensure recursive feasibility and safety.

The objective of SDP \eqref{eq:sdp} is to find a $P$ that gives a suitable terminal cost and set for trajectory tracking. The weights $c_{j}^\mathrm{c}$ are tuning parameters that are normalized with respect to the system constraint intervals to ensure equal tightening of each constraint $j \in \mathbb{N}_{[1,n^\mathrm{s}]}$ using tightening constant $c_{j}^\mathrm{s}$, which will be defined later in \eqref{eq:c_s}, and terminal set scaling $\alpha$.

Linear Matrix Inequality (LMI) \eqref{eq:sdp_lmi_tracking} ensures that the incremental Lyapunov function \eqref{eq:term_cost} exponentially decreases when applying the feedback law \eqref{eq:term_control_law} in the terminal set \eqref{eq:term_set}. LMI \eqref{eq:sdp_lmi_sys_constraints} ensures that the system constraints are satisfied for all $\bm{z} \in \mathcal{Z}$, see \cite{boyd1994linear} and \cite{kohler2020nonlinear} for more details.

Note that $\mathcal{Z}$ represents the continuous state-input space, and we optimize for $X$, $Y$, and $c_j^\mathrm{s}$ that uniformly hold for all $\bm{z} \in \mathcal{Z}$. Therefore, SDP \eqref{eq:sdp} is semi-infinite, and we need to grid or convexify $\mathcal{Z}$ to solve it. If \eqref{eq:sdp} is infeasible, one could leverage a more general formulation with state-dependent $X$ and $Y$; see \cite{kohler2020nonlinear} for details.

Note also that, since \eqref{eq:sdp} ensures exponential stability with linear feedback $K$, a feasible solution for non-holonomic systems, such as cars, only exists when enforcing a minimum velocity.

\subsection{Closed-loop properties}\label{subsec:tmpc_properties}
Property~\ref{property:ref_traj} and Property~\ref{property:obst_constraints} formalize the required properties for the reference trajectory and obstacle avoidance constraints:

\begin{property}\label{property:ref_traj}
    The reference trajectory satisfies the following construction (a) and update (b) conditions:
    \begin{enumerate}[label=(\alph*)]
        \item For $\tau \in [0,T^\mathrm{t}]$, it holds that:
        \begin{subequations}\label{eq:property_ref_traj_construction}
            \begin{align}
                \dot{\bm{x}}_{\tau|t}^\mathrm{r} &= f(\bm{x}_{\tau|t}^\mathrm{r},\bm{u}_{\tau|t}^\mathrm{r}), \label{eq:property_ref_traj_dynamics}\\
                \bm{r}_{\tau|t} &\in \bar{\mathcal{Z}}, \label{eq:property_ref_traj_sys_constraints}\\
                C \bm{x}_{\tau|t}^\mathrm{r} &\in \bar{\mathcal{F}}_{\tau|t}, \label{eq:property_ref_traj_obst_constraints}
            \end{align}
        \end{subequations}
        with
        \begin{align}\label{eq:property_ref_traj_sys_constraints_tightened}
            \bar{\mathcal{Z}} &\coloneqq \bar{\mathcal{X}} \times \bar{\mathcal{U}}\notag\\
            & = \left\{(\bm{x},\bm{u}) \in \mathbb{R}^{n^\mathrm{x} \times n^\mathrm{u}}\ \middle|\ 
            \begin{aligned}
                &g_{j}^\mathrm{s}(\bm{x},\bm{u}) + c_{j}^\mathrm{s} \alpha \leq 0,\\
                &j \in \mathbb{N}_{[1,n^\mathrm{s}]}
            \end{aligned}
            \right\},
        \end{align}
        and
        \begin{equation}\label{eq:property_ref_traj_obst_constraints_tightened}
            \bar{\mathcal{F}}_{\tau|t} \coloneqq \left\{\bm{p}_{\tau|t} \in \mathbb{R}^{n^\mathrm{p}}\ \middle|\ 
            \begin{aligned}
                &g_{j,\tau|t}^\mathrm{o}(\bm{p}_{\tau|t}) + c^\mathrm{o} \alpha \leq 0,\\
                &j \in \mathbb{N}_{[1,n^\mathrm{o}]}
            \end{aligned}
            \right\},
        \end{equation}
        with $\bar{\mathcal{Z}} \subseteq \mathcal{Z}$ and $\bar{\mathcal{F}}_{\tau|t} \subseteq \mathcal{F}_{\tau|t}$ following from tightening with $\alpha > 0$ from \eqref{eq:term_set} and:
        \begin{subequations}\label{eq:c_j}
            \begin{alignat}{2}
                c_{j}^\mathrm{s} &\coloneqq \norm{P^{-\frac{1}{2}} [I\ K^\top] L_{j}^\mathrm{s}},&&\quad j \in \mathbb{N}_{[1,n^\mathrm{s}]}, \label{eq:c_s}\\
                c^\mathrm{o} &\coloneqq \norm{P^{-\frac{1}{2}} C^\top}. \label{eq:c_o}
            \end{alignat}
        \end{subequations} \label{property:ref_traj_construction}
        \item For $\tau \in [0,T^\mathrm{s,t}]$, it holds that:
        \begin{equation}\label{eq:property_ref_traj_update}
            \bm{x}_{\tau+T^\mathrm{s,t}}^\mathrm{r} = \bm{x}_{\tau|t+T^\mathrm{s,t}}^\mathrm{r} = \bm{x}_{\tau+T^\mathrm{s,t}|t}^\mathrm{r}.
        \end{equation} \label{property:ref_traj_update}
    \end{enumerate}
\end{property}

Intuitively, Property~\ref{property:ref_traj}~\ref{property:ref_traj_construction} states that the reference trajectory is dynamically feasible \eqref{eq:property_ref_traj_dynamics}, and it satisfies tightened system constraints \eqref{eq:property_ref_traj_sys_constraints} and tightened obstacle avoidance constraints \eqref{eq:property_ref_traj_obst_constraints}. Property~\ref{property:ref_traj}~\ref{property:ref_traj_update} states that the reference trajectory is continuous, i.e. the reference trajectory in the time interval $[0, T^\mathrm{s,t}]$ of the current TMPC run is the same as the reference trajectory in the interval $[T^\mathrm{s,t}, 2T^\mathrm{s,t}]$ of the last TMPC run.

Figure~\ref{fig:alpha_levelset} illustrates an example of a 2D Lyapunov function and corresponding ellipsoidal terminal set around the reference trajectory.\looseness=-1

\begin{remark}\label{remark:alpha}
    The terminal set scaling $\alpha$ can be seen as a tuning parameter to make a trade-off between the performance of TMPC and PMPC. The smaller $\alpha$, the closer PMPC can plan to the obstacle avoidance constraints, resulting in a less conservative trajectory. The bigger $\alpha$, the bigger the terminal set for TMPC, making it easier for TMPC to find a solution. Although not formally proven in this paper, the size of the terminal region alpha is related to the inherent robustness of the TMPC, cf.~\cite{yu2014inherent}. This property ensures that closed-loop properties remain valid under sufficiently small disturbances, which is crucial for application on the real robot.
\end{remark}

\begin{figure}[t]
    \centering
    \includegraphics{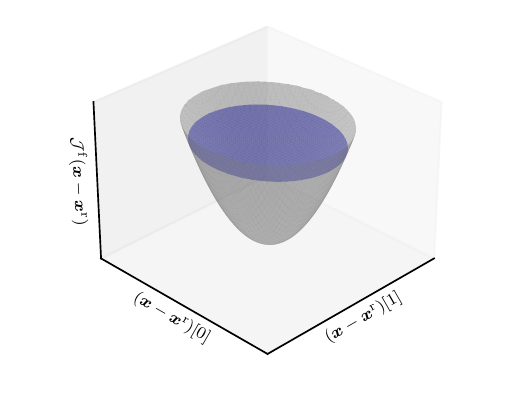}
    \caption{Quadratic Lyapunov function $\mathcal{J}^\mathrm{f,t}(\bm{x}{-}\bm{x}^\mathrm{r})$ for two state tracking error dimensions. The blue ellipsoid is the terminal set $\mathcal{X}^\mathrm{f,t}$ intersecting $\mathcal{J}^\mathrm{f,t}(\bm{x}{-}\bm{x}^\mathrm{r})$ at value $\alpha^2$.}
    \label{fig:alpha_levelset}
\end{figure}

\begin{property}\label{property:obst_constraints}
    The obstacle avoidance constraints satisfy the following construction (a) and update (b) conditions:
    \begin{enumerate}[label=(\alph*)]
        \item They form a subset such that the robot does not collide with obstacles: $\mathcal{F}_{\tau|t} \cap \mathcal{O} \oplus \mathcal{R} = \emptyset,\ \tau \in [0,T^\mathrm{p}]$. \label{property:obst_constraints_construction}
        \item They are updated in a consistent way, such that $\bm{p}_{\tau|t}^\mathrm{r} \in \bar{\mathcal{F}}_{\tau|t} \implies \bm{p}_{\tau|t}^\mathrm{r} \in \bar{\mathcal{F}}_{\tau-T^{\mathrm{s},\mathrm{p}}|t+T^{\mathrm{s},\mathrm{p}}},\ \tau \in [T^{\mathrm{s},\mathrm{p}},T^\mathrm{p}]$, with PMPC sampling time $T^{\mathrm{s},\mathrm{p}}$ and prediction horizon $T^\mathrm{p}$. \label{property:obst_constraints_update}
    \end{enumerate}
\end{property}

Intuitively, Property~\ref{property:obst_constraints}~\ref{property:obst_constraints_update} states  the reference trajectory should be
contained in the updated tightened obstacle-free region, given that it is contained in the current tightened
obstacle-free region at the same point in time.

Section~\ref{sec:pmpc} shows that Properties~\ref{property:ref_traj} and \ref{property:obst_constraints} are ensured by the PMPC design.\looseness=-1

Proposition~\ref{prop:term_ing} formalizes the properties of the TMPC terminal ingredients.\looseness=-1
\begin{proposition}[Terminal ingredients] \label{prop:term_ing}
    Suppose the terminal ingredients are designed as above, $\bm{r}$ satisfies Property~\ref{property:ref_traj} and $\mathcal{F}$ satisfies Property~\ref{property:obst_constraints}. Then for any $\bm{x}$ satisfying $(\bm{x} - \bm{x}^\mathrm{r}) \in \mathcal{X}^\mathrm{f,t}$:
    \begin{subequations}
        \begin{align}
            \frac{d}{dt}\mathcal{J}^\mathrm{f,t}(\bm{x}{-}\bm{x}^\mathrm{r}) &\leq -\mathcal{J}^\mathrm{s,t}(\bm{x},\kappa^\mathrm{f}(\bm{x},\bm{r}),\bm{r}), \label{eq:prop_term_ing_jf_decrease}\\
            (\bm{x},\kappa^\mathrm{f}(\bm{x},\bm{r})) &\in \mathcal{Z}, \label{eq:prop_term_ing_sys_constraints}\\
            C \bm{x} &\in \mathcal{F}. \label{eq:prop_term_ing_obst_constraints}
        \end{align}
    \end{subequations}
\end{proposition}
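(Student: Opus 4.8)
The plan is to establish the three conclusions of Proposition~\ref{prop:term_ing} by relating each to the corresponding LMI in the offline SDP~\eqref{eq:sdp} together with the tightening constants defined in~\eqref{eq:c_j}.

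First I would handle the Lyapunov decrease~\eqref{eq:prop_term_ing_jf_decrease}. Write $\bm{e} \coloneqq \bm{x} - \bm{x}^\mathrm{r}$ and observe that under the terminal feedback $\kappa^\mathrm{f}(\bm{x},\bm{r}) = \bm{u}^\mathrm{r} + K\bm{e}$ the error dynamics are $\dot{\bm{e}} = f(\bm{x},\bm{u}^\mathrm{r}+K\bm{e}) - f(\bm{x}^\mathrm{r},\bm{u}^\mathrm{r})$, using that $\bm{r}$ is dynamically feasible by Property~\ref{property:ref_traj}\ref{property:ref_traj_construction}, i.e.~\eqref{eq:property_ref_traj_dynamics}. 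Then $\frac{d}{dt}\mathcal{J}^\mathrm{f}(\bm{e}) = 2\bm{e}^\top P \dot{\bm{e}}$. The key analytical step is to bound this using the fact that $\mathcal{J}^\mathrm{f}$ is an incremental Lyapunov function: the LMI~\eqref{eq:sdp_lmi_tracking}, after the congruence transformation with $P = X^{-1}$ and $K = YP$, is equivalent (at each linearization point $\bm{z}\in\mathcal{Z}$) to $(A(\bm{z})+B(\bm{z})K)^\top P + P(A(\bm{z})+B(\bm{z})K) + Q + K^\top R K \preceq 0$. The standard argument (as in \cite{kohler2019nonlinear}) is to invoke the mean-value theorem to write $f(\bm{x},\bm{u}^\mathrm{r}+K\bm{e}) - f(\bm{x}^\mathrm{r},\bm{u}^\mathrm{r}) = (A_\xi + B_\xi K)\bm{e}$ for some convex-combination point $\xi$ on the segment between $(\bm{x}^\mathrm{r},\bm{u}^\mathrm{r})$ and $(\bm{x},\bm{u}^\mathrm{r}+K\bm{e})$; since both endpoints lie in $\mathcal{Z}$ (the reference in $\bar{\mathcal{Z}}\subseteq\mathcal{Z}$ by~\eqref{eq:property_ref_traj_sys_constraints}, and $\bm{x}$ with its feedback input in $\mathcal{Z}$ by the constraint-satisfaction step below) and $\mathcal{Z}$ is convex, $\xi\in\mathcal{Z}$, so the LMI applies at $\xi$. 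This yields $2\bm{e}^\top P(A_\xi+B_\xi K)\bm{e} \le -\bm{e}^\top(Q+K^\top R K)\bm{e} = -\mathcal{J}^\mathrm{s}(\bm{x},\kappa^\mathrm{f}(\bm{x},\bm{r}),\bm{r})$, which is exactly~\eqref{eq:prop_term_ing_jf_decrease}. I would note that this requires the terminal set~\eqref{eq:term_set} to be contained in the region where the linearization-based bound is valid, which is precisely the role of the grid/convexification of $\mathcal{Z}$ mentioned after~\eqref{eq:sdp}.

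Next, for constraint satisfaction~\eqref{eq:prop_term_ing_sys_constraints}: for each $j$, $g_j^\mathrm{s}(\bm{x},\kappa^\mathrm{f}(\bm{x},\bm{r})) = L_j^\mathrm{s}\begin{bmatrix}\bm{x}\\\bm{u}^\mathrm{r}+K\bm{e}\end{bmatrix} - \bm{l}_j^\mathrm{s} = g_j^\mathrm{s}(\bm{x}^\mathrm{r},\bm{u}^\mathrm{r}) + L_j^\mathrm{s}\begin{bmatrix}I\\K\end{bmatrix}\bm{e}$. The linear term is bounded via Cauchy--Schwarz in the $P$-weighted inner product: $L_j^\mathrm{s}\begin{bmatrix}I\\K\end{bmatrix}\bm{e} = \big(P^{-1/2}[I\ K^\top]{L_j^\mathrm{s}}^\top\big)^\top P^{1/2}\bm{e} \le \norm{P^{-1/2}[I\ K^\top]{L_j^\mathrm{s}}^\top}\,\norm{P^{1/2}\bm{e}} = c_j^\mathrm{s}\,\norm{\bm{e}}_P \le c_j^\mathrm{s}\alpha$, using $(\bm{x}-\bm{x}^\mathrm{r})\in\mathcal{X}^{\mathrm{t},\mathrm{f}}$ so $\norm{\bm{e}}_P\le\alpha$, and the definition~\eqref{eq:c_s} of $c_j^\mathrm{s}$. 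Combined with the tightened reference constraint $g_j^\mathrm{s}(\bm{x}^\mathrm{r},\bm{u}^\mathrm{r}) + c_j^\mathrm{s}\alpha \le 0$ from~\eqref{eq:property_ref_traj_sys_constraints_tightened}, this gives $g_j^\mathrm{s}(\bm{x},\kappa^\mathrm{f}(\bm{x},\bm{r}))\le 0$, i.e.~\eqref{eq:prop_term_ing_sys_constraints}. (Alternatively one can derive the same bound directly from the LMI~\eqref{eq:sdp_lmi_sys_constraints} via a Schur complement, which shows ${c_j^\mathrm{s}}^2 \ge L_j^\mathrm{s}\begin{bmatrix}Y\\X\end{bmatrix}X^{-1}\begin{bmatrix}Y\\X\end{bmatrix}^\top{L_j^\mathrm{s}}^\top = \norm{P^{-1/2}[I\ K^\top]{L_j^\mathrm{s}}^\top}^2$; I would mention this to make the connection to the SDP explicit.)

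Finally, for obstacle avoidance~\eqref{eq:prop_term_ing_obst_constraints}: the position error is $C\bm{x} - C\bm{x}^\mathrm{r} = C\bm{e}$, and for each obstacle half-space $j$, $g_{j,\tau|t}^\mathrm{o}(C\bm{x}) = g_{j,\tau|t}^\mathrm{o}(C\bm{x}^\mathrm{r}) + L_{j,\tau|t}^\mathrm{o}C\bm{e}$, with $L_{j,\tau|t}^\mathrm{o}C\bm{e} = \big(P^{-1/2}C^\top {L_{j,\tau|t}^\mathrm{o}}^\top\big)^\top P^{1/2}\bm{e} \le \norm{P^{-1/2}C^\top}\cdot\norm{L_{j,\tau|t}^\mathrm{o}}\cdot\norm{\bm{e}}_P \le c^\mathrm{o}\alpha$, using $\norm{L_{j,\tau|t}^\mathrm{o}}=1$ and~\eqref{eq:c_o}. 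Together with the tightened obstacle constraint $g_{j,\tau|t}^\mathrm{o}(C\bm{x}^\mathrm{r}) + c^\mathrm{o}\alpha \le 0$ on the reference from~\eqref{eq:property_ref_traj_obst_constraints_tightened}, this yields $C\bm{x}\in\mathcal{F}$. The main obstacle in the whole argument is the first part: carefully justifying that the incremental-Lyapunov inequality obtained from the LMI at the sampled/convexified points actually holds along the true nonlinear error dynamics inside the terminal set — this is where the mean-value-theorem argument and the convexity of $\mathcal{Z}$ (and the fact that both $\bm{x}$ and $\bm{x}^\mathrm{r}$, with their respective inputs, stay in $\mathcal{Z}$) must be combined, and where I would lean on the construction in \cite{kohler2019nonlinear}; the constraint-tightening parts~\eqref{eq:prop_term_ing_sys_constraints}--\eqref{eq:prop_term_ing_obst_constraints} are then routine Cauchy--Schwarz estimates.
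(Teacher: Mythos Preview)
Your proposal is correct and matches the paper's approach: the paper defers~\eqref{eq:prop_term_ing_jf_decrease} and~\eqref{eq:prop_term_ing_sys_constraints} entirely to \cite{kohler2019nonlinear} and spells out only the Cauchy--Schwarz/submultiplicativity bound for~\eqref{eq:prop_term_ing_obst_constraints}, which is precisely your argument. One small technical note: the mean-value theorem does not yield a single intermediate point $\xi$ for vector-valued $f$, so the rigorous version of your decrease argument uses the integral form $\dot{\bm{e}}=\int_0^1[A(\xi(s))+B(\xi(s))K]\bm{e}\,ds$ and applies the LMI pointwise in $s$ before integrating---this is exactly what \cite{kohler2019nonlinear} does and what you already defer to.
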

\begin{proof}
    \eqref{eq:prop_term_ing_jf_decrease} holds given a feasible solution to SDP ~\eqref{eq:sdp}. The main steps of the proof, similar to \cite{kohler2020nonlinear}, are:
    \begin{align*}
        \frac{d}{dt}\mathcal{J}^\mathrm{f,t}(\bm{\delta}) &= 2 \bm{\delta}^\top P \left(f(\bm{x},\bm{u})-f(\bm{x}^\mathrm{r},\bm{u}^\mathrm{r})\right)\\
        &\overset{\text{(a)}}{=} 2 \bm{\delta}^\top P \left(A(\bm{z}(s))+B(\bm{z}(s))K\right) \bm{\delta}\\
        &\overset{\text{(b)}}{\leq} -\bm{\delta}^\top \left(Q+K^\top R K\right) \bm{\delta}\\
        &= -\mathcal{J}^\mathrm{s,t}(\bm{x},\kappa^\mathrm{f}(\bm{x},\bm{r}),\bm{r}),
    \end{align*}
    where $\bm{\delta}{=}\bm{x}{-}\bm{x}^\mathrm{r}$. (a) uses the fact that $\bm{z}(s){=}\begin{bmatrix}\bm{x}^\mathrm{r}\\\bm{u}^\mathrm{r}\end{bmatrix}{+}s\begin{bmatrix}\bm{I}^{\mathrm{n}^x}\\K\end{bmatrix}\bm{\delta}$ linearly interpolates between $(\bm{x},\bm{u})$ and $(\bm{x}^\mathrm{r},\bm{u}^\mathrm{r})$ for $s \in [0,1]$, and the mean value theorem ensuring that this equality holds for some value of $s$. (b) follows from LMI~\eqref{eq:sdp_lmi_tracking}, see \cite[Lemma~3]{kohler2020nonlinear}.
    Furthermore, given $L \in \mathbb{R}^{1 \times n^\cdot}$ satisfying $\norm{L} = 1$ and $\bm{\delta}$ such that $\norm{\bm{\delta}}_P^2 \leq \alpha^2$ it holds that:
    \begin{align*}
        L C \bm{\delta} &\leq \norm{L C P^{-\frac{1}{2}} P^\frac{1}{2} \bm{\delta}} \leq \norm{L C P^{-\frac{1}{2}}} \norm{P^\frac{1}{2} \bm{\delta}}\\
        &\leq \norm{C P^{-\frac{1}{2}}} \norm{P^\frac{1}{2} \bm{\delta}} \leq \norm{C P^{-\frac{1}{2}}} \alpha,
    \end{align*}
    where the second and third inequalities use the triangular inequality. Applying this to $L_{j}^\mathrm{o}$ with $\norm{L_{j}^\mathrm{o}} = 1$, it holds that $L_{j}^\mathrm{o} C \bm{\delta} \leq c^\mathrm{o} \alpha$, meaning that $L_{j}^\mathrm{o} C \bm{x} \leq L_{j}^\mathrm{o} C \bm{x}^\mathrm{r} + c^\mathrm{o} \alpha \leq l_{j}^\mathrm{o},\ j \in \mathbb{N}_{[1,n^\mathrm{o}]}$ holds by \eqref{eq:property_ref_traj_obst_constraints}, given \eqref{eq:property_ref_traj_obst_constraints_tightened} and \eqref{eq:c_o}, thereby proving \eqref{eq:prop_term_ing_obst_constraints}. A similar proof holds for \eqref{eq:prop_term_ing_sys_constraints}, see \cite{kohler2020nonlinear} for more details.\looseness=-1
\end{proof}

Intuitively, Proposition~\ref{prop:term_ing} states that, given Properties~\ref{property:ref_traj} and~\ref{property:obst_constraints}, the closed-loop trajectory converges to the reference trajectory and system and obstacle avoidance constraints are satisfied when applying terminal control law \eqref{eq:term_control_law} in terminal set \eqref{eq:term_set}. Therefore, the terminal set is positively invariant under the terminal control law. This means that the shifted previously optimal solution appended with the terminal control law is a feasible, not necessarily optimal, candidate solution for the next TMPC run. Repeating this argument demonstrates that TMPC Problem~\eqref{eq:tmpc} is recursively feasible. This is formally proven in Theorem~\ref{thm:tmpc} in Section~\ref{subsec:hmpc_analysis}.

\section{PMPC design}\label{sec:pmpc}
The planner's goal is to optimize a dynamically feasible trajectory towards goal position $\bm{p}^\mathrm{g}$ such that the TMPC is guaranteed to track it. Section~\ref{subsec:pmpc_formulation} presents the PMPC formulation to optimize the trajectory. The generation of the obstacle avoidance constraints is described in Section~\ref{subsec:pmpc_obst_constraints}.

\subsection{PMPC formulation}\label{subsec:pmpc_formulation}
The PMPC formulation is given by:

\begin{subequations}\label{eq:pmpc}
    \begin{alignat}{2}
        \underset{\substack{\bm{x}_{\cdot|t},\bm{u}_{\cdot|t}}}{\operatorname{min}}&\mathrlap{\mathcal{J}^{\mathrm{f},\mathrm{p}}(\bm{x}_{T^\mathrm{p}|t},\bm{u}_{T^\mathrm{p}|t},\bm{p}^\mathrm{g})\!+\!\int_{\tau=0}^{T^\mathrm{p}} \mathcal{J}^{\mathrm{s},\mathrm{p}}(\bm{x}_{\tau|t},\bm{u}_{\tau|t},\bm{p}^\mathrm{g})\ d\tau,}&&\hspace{200pt} \label{eq:pmpc_obj}\\
        \operatorname{s.t.}\ &\bm{x}_{\tau|t} = \bm{x}_{\tau+T^\mathrm{s,p}|t-T^\mathrm{s,p}}^*, &&\ \tau \in [0,T^\mathrm{s,p}], \label{eq:pmpc_state_init}\\
        &\dot{\bm{x}}_{\tau|t} = f(\bm{x}_{\tau|t},\bm{u}_{\tau|t}),&&\ \tau \in [0,T^\mathrm{p}], \label{eq:pmpc_state_update}\\
        &(\bm{x}_{\tau|t},\bm{u}_{\tau|t}) \in \bar{\mathcal{Z}},&&\ \tau \in [0,T^\mathrm{p}], \label{eq:pmpc_sys_constraints}\\
        &\bm{p}_{\tau|t} \in \bar{\mathcal{F}}_{\tau|t},&&\ \tau \in [0,T^\mathrm{p}], \label{eq:pmpc_obst_constraints}\\
        &f(\bm{x}_{T^\mathrm{p}|t},\bm{u}_{T^\mathrm{p}|t})=0, \label{eq:pmpc_term_set}
    \end{alignat}
\end{subequations}
with PMPC sampling $T^\mathrm{s,p}$, and $\bar{\mathcal{Z}}$ and $\bar{\mathcal{F}}_{\tau|t},\ \tau \in [0,T^\mathrm{p}],$ given by \eqref{eq:property_ref_traj_sys_constraints_tightened} and \eqref{eq:property_ref_traj_obst_constraints_tightened}, respectively. As mentioned, the cost is a user-chosen planning cost that can include any terms suitable for navigating in the environment (e.g., goal-oriented MPC (GO-MPC), see \cite{brito2021go}, or MPCC, see \cite{romero2022model,brito2019model}). Section~\ref{subsec:results_impl} elaborates on the specific function used to generate the experimental results. The combination of initial state constraint \eqref{eq:pmpc_state_init} and system dynamics \eqref{eq:pmpc_state_update}, is sufficient to enforce both \eqref{eq:property_ref_traj_dynamics} in Property~\ref{property:ref_traj}~\ref{property:ref_traj_construction} and Property~\ref{property:ref_traj_update}. Furthermore, \eqref{eq:pmpc_sys_constraints} and \eqref{eq:pmpc_obst_constraints} ensure \eqref{eq:property_ref_traj_sys_constraints} and \eqref{eq:property_ref_traj_obst_constraints} in Property~\ref{property:ref_traj}~\ref{property:ref_traj_construction}, respectively. Finally, \eqref{eq:pmpc_term_set} is a terminal set constraint to ensure the recursive feasibility of the PMPC. Note that \eqref{eq:pmpc_state_init} can be efficiently implemented by using a shorter horizon $T^\mathrm{p}-T^{\mathrm{s},\mathrm{p}}$ with initial state constraint $\bm{x}_{0|t} = \bm{x}_{T^{\mathrm{s},\mathrm{p}}|t-T^{\mathrm{s},\mathrm{p}}}^*$.

\subsection{Obstacle avoidance constraints}\label{subsec:pmpc_obst_constraints}
\begin{figure*}[!ht]
    \centering
    \includesvg{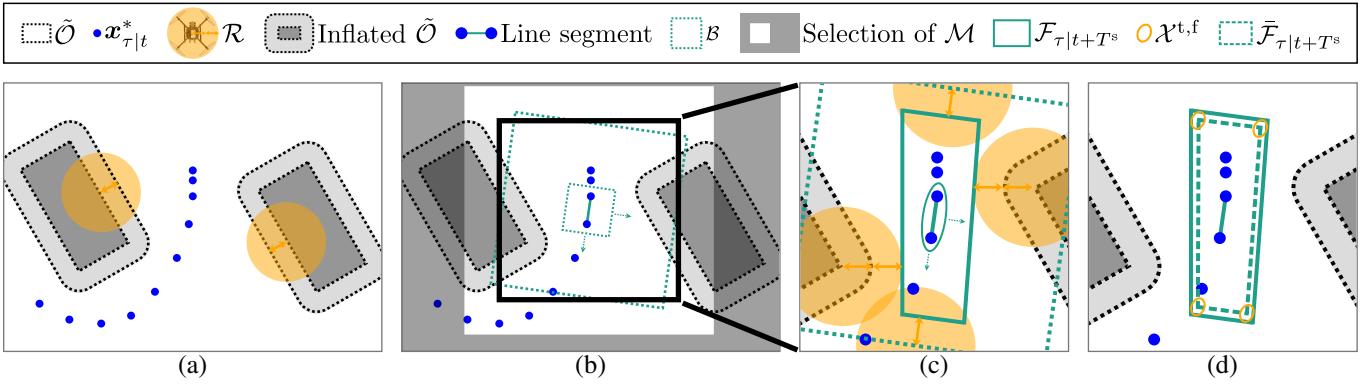}
    \caption{2D visualization of map pre-processing and \textit{I-DecompUtil}, given occupied grid cells $\tilde{\mathcal{O}}$ and the last optimized plan $\bm{x}_{\tau|t}^*$. (a) The obstacles are inflated by half of the robot radius (orange arrows). (b) To construct the obstacle avoidance constraints around a specific line segment, first, a subset of the grid map $\mathcal{M}$ is selected such that the bounding box $\mathcal{B}$ with any orientation fits in this subset. (c) The obstacle avoidance constraints $\mathcal{F}_{\tau|t+T^\mathrm{s,p}}$ are constructed according to the SFC method in \cite{liu2017planning} by growing an ellipsoid around the line segment, creating the tangential lines and clipping them to $\mathcal{B}$. Furthermore, $\mathcal{F}_{\tau|t+T^\mathrm{s,p}}$ are tightened by the other half of the robot radius, such that the robot does not collide with the obstacles if its center satisfies $\mathcal{F}_{\tau|t+T^\mathrm{s,p}}$. (d) The tightened obstacle avoidance constraints $\bar{\mathcal{F}}_{\tau|t+T^\mathrm{s,p}}$ are constructed by tightening $\mathcal{F}_{\tau|t+T^\mathrm{s,p}}$ with $c^\mathrm{o} \alpha$ according to \eqref{eq:property_ref_traj_obst_constraints_tightened}, visualized using terminal set $\mathcal{X}^\mathrm{f,t}$ in this figure. Note that (b)-(d) are repeated for all line segments.}
    \label{fig:obst_constraints}
\end{figure*}

This section describes the method to generate obstacle avoidance constraints of the form \eqref{eq:obst_constraints_general}. To deal with an online observed environment with obstacles $\mathcal{O}$, we leverage grid map representation $\mathcal{M}$ in which the occupied cells $\tilde{\mathcal{O}}$ indicate the obstacle edges. Given these occupied cells, the method should generate constraints satisfying Property~\ref{property:obst_constraints}.

In particular, the method builds on \textit{DecompUtil} as introduced in \cite[Fig. 5-8]{liu2017planning}. Based on the grid map and a linear path segment, \textit{DecompUtil} builds a convex obstacle-free region by appending the tangential line of a growing ellipsoid around the path segment at the first encountered occupied grid cell after removing all grid cells behind the previously generated tangential lines and clipping the result to bounding box $\mathcal{B}$.

In this work, the obstacle avoidance constraints are constructed following the same method, based on piecewise linear segments between optimally planned PMPC positions $\bm{p}_{iT^{\mathrm{s},\mathrm{p}}|t-T^{\mathrm{s},\mathrm{p}}}^*$ and $\bm{p}_{(i+1)T^{\mathrm{s},\mathrm{p}}|t-T^{\mathrm{s},\mathrm{p}}}^*, i \in \mathbb{N}_{[1,N-1]}, N = \frac{T^\mathrm{p}}{T^{\mathrm{s},\mathrm{p}}}$, starting with $\mathcal{B}$ around the initial position. This approach is further referred to as \textit{Iterative-DecompUtil} (\textit{I-DecompUtil}) and visualized in Figure~\ref{fig:obst_constraints}.

There are two important things to note in the figure: the \textit{Selection of $\mathcal{M}$} and the usage of the robot radius to both tighten the obstacle-free convex regions and inflate the obstacles.\looseness=-1

\textit{Selection of $\mathcal{M}$} reduces the computation time by reducing the number of grid cells that need to be considered.

\begin{figure}[!h]
    \centering
    \includesvg{HMPC_no_constraints_tightening.svg}
    \caption{Visualization of a scenario in which the robot would crash into the obstacle if the constraints are not tightened. The obstacle is inflated, here represented by $\mathcal{O}$, with corresponding occupied grid cells $\tilde{\mathcal{O}}$ and the point representation of $\tilde{\mathcal{O}}$ in the code. While the ellipsoid does not intersect the grid points, it overlaps with the obstacle region, i.e., $\mathcal{F} \cap \mathcal{O} \neq \emptyset$.}
    \label{fig:obst_constraints_tightening_purpose}
\end{figure}

Since \eqref{eq:tmpc} and \eqref{eq:pmpc} optimize for the geometric robot center, the distance between obstacle avoidance constraints and occupied grid cells should be at least the robot radius. Tightening the constraints by the robot radius would result in sharp corners between subsequent obstacle-free regions around sharp obstacle corners, resulting in more conservative plans. On the other hand, inflating the obstacles by the robot radius would lead to the scenario depicted in Figure~\ref{fig:obst_constraints_tightening_purpose}, meaning that a crash may occur. To prevent both issues, half of the robot radius is used to tighten the constraints, and the other half is used to inflate the obstacles. This ensures collision avoidance, given that the robot radius is significantly larger than the grid map resolution.\looseness=-1

The resulting obstacle-free convex regions from \textit{I-DecompUtil} are piecewise-defined as:
\begin{equation}\label{eq:obst_constraints_discrete}
    \mathcal{F}_{\tau+iT^{\mathrm{s},\mathrm{p}}|t} = \mathcal{F}_{iT^{\mathrm{s},\mathrm{p}}|t},\ \tau \in (0,T^{\mathrm{s},\mathrm{p}}],\ i \in \mathbb{N}_{[0,N-1]}.
\end{equation}
This means that $\bm{p}_{(iT^\mathrm{s,p},(i+1)T^{\mathrm{s},\mathrm{p}}]|t}^* \in \mathcal{F}_{iT^\mathrm{s,p}|t}, i \in \mathbb{N}_{[0,N-1]}$. Note that, although the regions are constructed using piecewise linear segments, the nonlinear robot trajectory is contained in these convex regions since the constraints are imposed on the TMPC and PMPC trajectories through \eqref{eq:tmpc_obst_constraints} and \eqref{eq:pmpc_obst_constraints}, respectively.\looseness=-1

To prove recursive feasibility of PMPC formulation \eqref{eq:pmpc} later in Section~\ref{subsec:hmpc_analysis}, we consider the following assumption regarding the grid map $\mathcal{M}$ and the PMPC sampling time $T^{\mathrm{s},\mathrm{p}}$:
\begin{assumption}\label{ass:map}
    The grid map $\mathcal{M}$ and PMPC sampling time $T^{\mathrm{s},\mathrm{p}}$ are such that each trajectory segment $\bm{p}_{\tau+iT^{\mathrm{s},\mathrm{p}}|t-T^{\mathrm{s},\mathrm{p}}}^*, \tau \in [0,T^{\mathrm{s},\mathrm{p}}], i \in \mathbb{N}_{[0,N-1]}$ is contained within the ellipsoid to used to generate the first half-space constraint, $j = 1$ in \eqref{eq:obst_constraints_general}, of the obstacle-free region $\mathcal{F}_{iT^\mathrm{s,p}|t}, i \in \mathbb{N}_{[0,N-1]}$.
\end{assumption}
\begin{remark}
    This assumption becomes a limiting factor in scenarios with a high obstacle density, a long PMPC sampling interval, and a high velocity. If this assumption is not satisfied, the PMPC sampling time or the maximum velocity constraint needs to be reduced. This was not a limiting factor in the simulations and experiments in Section~\ref{sec:results}.
\end{remark}

Combining \eqref{eq:obst_constraints_discrete}, Assumption~\ref{ass:map}, and the fact that the convex regions are generated based on the previously optimal solution $\bm{p}_{[0,T^\mathrm{p}]|t-T^\mathrm{s,p}}^*$, $\bm{p}_{[T^{\mathrm{s},\mathrm{p}},T^\mathrm{p}]|t-T^\mathrm{s,p}}^*$ also satisfies the new constraints:
\begin{subequations}\label{eq:obst_constraints}
    \begin{align}
        \bm{p}_{\tau|t-T^\mathrm{s,p}}^* &\in \mathcal{F}_{\tau-T^{\mathrm{s},\mathrm{p}}|t},\ \tau \in [T^{\mathrm{s},\mathrm{p}},T^\mathrm{p}],\\
        \bm{p}_{T^\mathrm{p}|t-T^\mathrm{s,p}}^* &\in \mathcal{F}_{T^\mathrm{p}|t}.
    \end{align}
\end{subequations}

Thus, Property~\ref{property:obst_constraints}~\ref{property:obst_constraints_construction} is ensured by the combination of Assumption~\ref{ass:map} and the fact that the obstacle avoidance constraints are generated by growing ellipsoids starting in free space until it touches $\tilde{\mathcal{O}}$. Moreover, Property~\ref{property:obst_constraints}~\ref{property:obst_constraints_update} is satisfied by \eqref{eq:obst_constraints}.

In conclusion, by formulation \eqref{eq:pmpc}, Assumption~\ref{ass:map}, and the design of the obstacle avoidance constraints generation method, both Property~\ref{property:ref_traj} and Property~\ref{property:obst_constraints} are satisfied.

The following section provides an overview of how TMPC and PMPC are co-designed and details the theoretical properties of the overall HMPC framework.

\section{HMPC framework}\label{sec:hmpc}
This section provides the design and implementation overview of the overall HMPC framework in Section~\ref{subsec:hmpc_overview}, followed by the theoretical analysis in Section~\ref{subsec:hmpc_analysis}, and a summary of the framework design and properties in Section~\ref{subsec:hmpc_summary}.

\subsection{Design and implementation overview}\label{subsec:hmpc_overview}
Figure~\ref{fig:hmpc_system_diagram} gave an overview of the HMPC framework. Given $\bm{p}^\mathrm{g}$ from a global planner, and $\mathcal{M}$ satisfying Assumption~\ref{ass:map}, the PMPC, formulated in \eqref{eq:pmpc} with sampling time $T^{\mathrm{s},\mathrm{p}}$ satisfying Assumption~\ref{ass:map} and horizon length $T^\mathrm{p}$, generates obstacle avoidance constraints $\mathcal{F}$ and optimizes a corresponding feasible reference trajectory $(\bm{x}^\mathrm{r},\bm{u}^\mathrm{r})$ towards $\bm{p}^\mathrm{g}$. Based on the state feedback $\bm{x}$, the TMPC, formulated in \eqref{eq:tmpc} with sampling time $T^{\mathrm{s},\mathrm{t}}$ and horizon length $T^\mathrm{t}$, computes the control input $\bm{u}$ required to track $(\bm{x}^\mathrm{r},\bm{u}^\mathrm{r})$.

The main motivation for this framework is its capability to solve an expensive optimization-based planning and tracking problem on an embedded computer with limited computation power in real time. Since the optimization takes a non-negligible amount of time (as shown later in Section~\ref{sec:results}), it introduces a delay in both the reference trajectory update and the control input update. To ensure the dynamic feasibility of the optimized solution, the framework is implemented with timing properties highlighted in Figure~\ref{fig:hmpc_timing}.

Both PMPC and TMPC run at constant sampling times $T^{\mathrm{s},\mathrm{p}}$ and $T^{\mathrm{s},\mathrm{t}}$, respectively, with $T^{\mathrm{s},\mathrm{p}} = \beta T^{\mathrm{s},\mathrm{t}}, \beta \geq 2$. For convenience of visualization, the figure is constructed using $\beta = 3$. Note that the value of $\beta$ used in the implementation is provided in Section~\ref{sec:results}. Furthermore, both PMPC and TMPC optimize a trajectory that becomes valid one sampling time in the future. This means that TMPC optimizes a trajectory $(\bm{u}_{\tau|t_i}^{\mathrm{t},*},\bm{x}_{\tau|t_i}^{\mathrm{t},*})$ at $t_i-T^{\mathrm{s},\mathrm{t}}$ before it becomes valid at $t_i$, based on a reference plan $(\bm{u}_{\tau|t_i}^{\mathrm{p},*},\bm{x}_{\tau|t_i}^{\mathrm{p},*})$ that the PMPC started optimizing at $t_{i-1} - T^{\mathrm{s},\mathrm{t}} = t_i - T^{\mathrm{s},\mathrm{p}} - T^{\mathrm{s},\mathrm{t}}$. At $t_i$, $\bm{u}_{0|t_i}^{\mathrm{t},*}$ is sent to the robot before the TMPC starts optimizing the next trajectory $(\bm{u}_{\tau|t_i+T^{\mathrm{s},\mathrm{t}}}^{\mathrm{t},*},\bm{x}_{\tau|t_i+T^{\mathrm{s},\mathrm{t}}}^{\mathrm{t},*})$. Thus, the TMPC optimization can take a maximum of $T^{\mathrm{s},\mathrm{t}}$ and the PMPC a maximum of $T^{\mathrm{s},\mathrm{p}}$ in execution time.\looseness=-1

\begin{figure*}
    \centering
    \includesvg{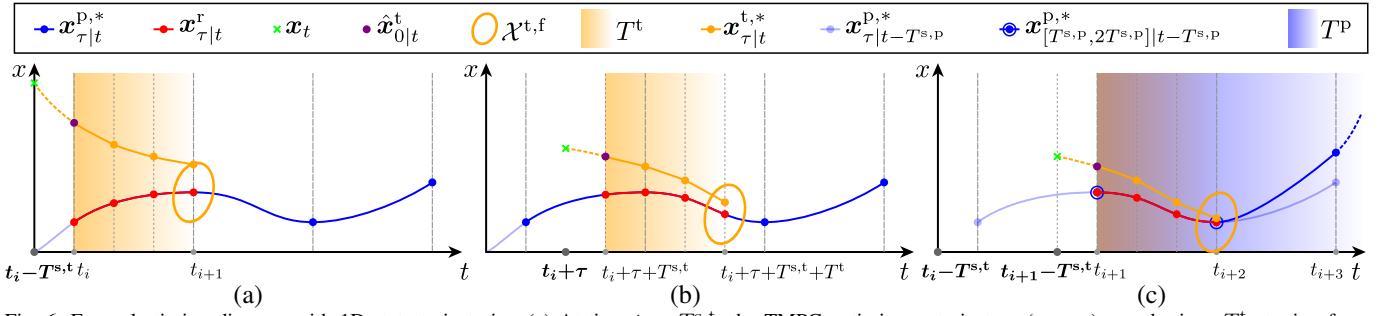}
    \caption{Example timing diagram with 1D state trajectories. (a) At time $t_i-T^{\mathrm{s},\mathrm{t}}$, the TMPC optimizes a trajectory (orange) over horizon $T^\mathrm{t}$ starting from forward-simulated state $\hat{\bm{x}}_{0|t_i}^t$ (purple), which is the model response when applying $\bm{u}_{[0,T^{\mathrm{s},\mathrm{t}}]|t_i-T^{\mathrm{s},\mathrm{t}}}^{\mathrm{t},*}$ starting from current state $\bm{x}_{t_i-T^{\mathrm{s},\mathrm{t}}}$ (green). The trajectory ends in terminal set $\mathcal{X}^\mathrm{f,t}$ around the reference trajectory $\bm{x}_{\tau|t_i}^\mathrm{r}$ (red) that becomes valid at $t_i$. $\bm{x}_{\tau|t_i}^\mathrm{r}$ is the sub-sampled version of reference plan $\bm{x}_{\tau|t_i}^{\mathrm{p},*}$ (blue). (b) In this example, the TMPC executes 2 more times ($\tau \in \{0,T^{\mathrm{s},\mathrm{t}}\}$) until the next reference trajectory becomes valid, thereby getting closer to the reference. (c) At time $t_{i+1}-T^{\mathrm{s},\mathrm{t}}$, the TMPC starts optimizing a trajectory based on a new reference plan $\bm{x}_{\tau|t_{i+1}}^{\mathrm{p},*}$ that becomes valid at $t_{i+1}$. This reference plan is optimized by the PMPC starting at time $t_i - T^{\mathrm{s},\mathrm{t}}$ and satisfies the initial state equality constraint \eqref{eq:pmpc_state_init}.}
    \label{fig:hmpc_timing}
\end{figure*}

In conclusion, the HMPC framework gives valid reference trajectories and constraints and control inputs by the satisfaction of Properties~\ref{property:ref_traj} and ~\ref{property:obst_constraints} and allows sending control commands to the robot at a fixed frequency.

\begin{remark}\label{rem:beta}
    In theory, any $\beta \in \mathbb{I}_{\geq 1}$ works provided that $T^\mathrm{p} \geq (1 + \lceil\frac{T^\mathrm{t}}{T^{\mathrm{s},\mathrm{p}}}\rceil + 2) T^\mathrm{s,p}$ such that the PMPC has at least three stages free to optimize to move towards the goal and satisfy \eqref{eq:pmpc_term_set}. If $\lceil\frac{T^\mathrm{t}}{T^{\mathrm{s},\mathrm{p}}}\rceil > 1$ one has to adjust \eqref{eq:pmpc_state_init} accordingly by constraining more PMPC stages. Given a TMPC design with fixed $T^\mathrm{s,t}$ and $T^\mathrm{t}$, increasing $\beta$ means that $T^\mathrm{s,p}$ increases, giving a coarser re-planning time and thus reduced computational load for the PMPC, and vice versa. Hence, the user can trade off PMPC execution time and performance by choosing $\beta$.
\end{remark}

Algorithm~\ref{alg:offline} and Algorithm~\ref{alg:online} give an overview of the relevant design choices of the HMPC framework.
\begin{algorithm}
    \caption{Offline design}\label{alg:offline}
    \begin{algorithmic}[1]
        \StateNoIndent \textbf{Input:} $f$, $\mathcal{Z}$
        \State $X, Y \ \ \: \leftarrow$ Solve \eqref{eq:sdp} using gridding or convexification
        \State $P, K, c_{j}^\mathrm{s}, c^\mathrm{o} \leftarrow$ Compute using \eqref{eq:P_K} and \eqref{eq:c_j}
        \State $\alpha \quad\quad\,\:\leftarrow$ Choose
        \State $\mathcal{X}^\mathrm{f,t}, \bar{\mathcal{Z}} \leftarrow$ Compute using \eqref{eq:term_set} and \eqref{eq:property_ref_traj_sys_constraints_tightened}
    \end{algorithmic}
\end{algorithm}

\begin{algorithm}
    \caption{Online implementation}\label{alg:online}
    \begin{algorithmic}[1]
        \StateNoIndent \textbf{Input:} Results from Algorithm~\ref{alg:offline}, $\mathcal{B}$, $\mathcal{R}$, $T^{\mathrm{s},\mathrm{t}}$, $T^\mathrm{t}$, $T^{\mathrm{s},\mathrm{p}}\ (\beta)$, $T^\mathrm{p}$, $\bm{p}^\mathrm{g}$ and $\mathcal{M}$
        \StateNoIndent \textbf{Define:} $t_i = i T^{\mathrm{s},\mathrm{p}}, i \in \mathbb{N}_{\geq 0}$
        \StateNoIndent \textbf{Define:} $t_{i,j} = t_i + j T^{\mathrm{s},\mathrm{t}}, j \in \mathbb{N}_{[0,\beta-1]}$
        \State $i = 0, j = 0$
        \For{$t=t_{i,j}$} \textbf{run TMPC}
            \State Apply $\bm{u}_{\tau+t}^*, \tau \in [0,T^{\mathrm{s},\mathrm{t}}]$ \eqref{eq:closed-loop}
                \State $\bm{x}_{t} \quad\ \ \ \ \leftarrow$ Measure
                \State $\bm{x}_{0|t_{i,j+1}} \leftarrow$ Forward-predict using $\bm{x}_{t}$
                \State $\mathcal{F}_{\cdot|t} \quad\ \ \,\leftarrow$ Select based on $\mathcal{F}_{\cdot|t_i}$
                \State $\bm{u}_{\cdot|t}^* \quad\ \ \,\leftarrow$ Solve \eqref{eq:tmpc} with $\bm{x}_{0|t} = \bm{x}_{0|t_{i,j+1}}$
                \State $j \qquad\ \ \:\leftarrow j + 1$
                \If{$j=\beta-1$ \textbf{and} $\bm{p}^\mathrm{g}$ not reached} \textbf{run PMPC}
                    \State $\mathcal{F}_{\cdot|t_i}, \bar{\mathcal{F}}_{\cdot|t_i} \ \ \leftarrow$ Construct as in Section~\ref{subsec:pmpc_obst_constraints}
                    \State $(\bm{x}_{\cdot|t_i}^{\mathrm{p},*}, \bm{u}_{\cdot|t_i}^{\mathrm{p},*}) \leftarrow$ Solve \eqref{eq:pmpc}
                    \State Send $(\bm{x}_{\cdot|t_i}^{\mathrm{r}}{=}\bm{x}_{\cdot|t_i}^{\mathrm{p},*}, \mathcal{F}_{\cdot|t_i})$ to TMPC
                    \State $i \qquad\qquad\ \,\leftarrow i + 1$
                    \State $j \qquad\qquad\ \leftarrow$ 0
                \EndIf
        \EndFor
    \end{algorithmic}
\end{algorithm}

\subsection{Theoretical analysis}\label{subsec:hmpc_analysis}
First, Theorem~\ref{thm:tmpc} formalizes the TMPC obstacle avoidance and recursive feasibility guarantees. Secondly, Theorem~\ref{thm:pmpc} formalizes the recursive feasibility of the PMPC. By combining these results, Corollary~\ref{cor:hmpc} concludes the recursive feasibility and obstacle avoidance of the HMPC framework.

\begin{theorem}\label{thm:tmpc}
    Suppose the terminal ingredients are designed according to Section~\ref{subsec:tmpc_term_ing}, $\bm{r}$ satisfies Property~\ref{property:ref_traj}, $\mathcal{F}$ satisfies Property~\ref{property:obst_constraints}, and \eqref{eq:tmpc} is feasible at $t=0$. Then, the resulting closed-loop system ensures that \eqref{eq:tmpc} is recursively feasible, satisfies system constraints $(\bm{x}_t,\bm{u}_t) \in \mathcal{Z}$ and avoids obstacle collisions $C \bm{x}_t \notin \mathcal{O}$ for all $t \geq 0$. Moreover, the tracking error $\norm{\bm{x}_t - \bm{x}_t^\mathrm{r}}$ asymptotically converges to zero.
\end{theorem}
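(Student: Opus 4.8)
The plan is to follow the standard continuous-time MPC recursive feasibility argument, using the candidate solution built from shifting the previous optimal trajectory and appending a piece generated by the terminal control law $\kappa^\mathrm{f}$. First I would fix a sampling instant $t$ and assume \eqref{eq:tmpc} is feasible at $t$ with optimizer $(\bm{x}_{\cdot|t}^*,\bm{u}_{\cdot|t}^*)$. For the next instant $t+T^\mathrm{s}$, I would construct the candidate input by setting $\bm{u}_{\tau|t+T^\mathrm{s}} = \bm{u}_{\tau+T^\mathrm{s}|t}^*$ for $\tau \in [0,T-T^\mathrm{s}]$ (the shifted part) and $\bm{u}_{\tau|t+T^\mathrm{s}} = \kappa^\mathrm{f}(\bm{x}_{\tau|t+T^\mathrm{s}},\bm{r}_{\tau|t+T^\mathrm{s}})$ for $\tau \in [T-T^\mathrm{s},T]$ (the terminal part), with the candidate state trajectory obtained by integrating \eqref{eq:tmpc_state_update} from $\bm{x}_{0|t+T^\mathrm{s}} = \bm{x}_{t+T^\mathrm{s}} = \bm{x}_{T^\mathrm{s}|t}^*$. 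I would then verify each constraint of \eqref{eq:tmpc} for this candidate: dynamic feasibility \eqref{eq:tmpc_state_update} holds by construction; on the shifted part, system constraints \eqref{eq:tmpc_sys_constraints} and obstacle constraints \eqref{eq:tmpc_obst_constraints} hold because the corresponding constraints held for the previous optimizer, using Property~\ref{property:ref_traj}~\ref{property:ref_traj_update} for the reference consistency and Property~\ref{property:obst_constraints}~\ref{property:obst_constraints_update} for the obstacle-region consistency under the update; on the terminal part, I invoke Proposition~\ref{prop:term_ing}: since the previous terminal constraint \eqref{eq:tmpc_term_set} placed $\bm{x}_{T|t}^*$ in $\mathcal{X}^{\mathrm{t},\mathrm{f}}$, the invariance established via \eqref{eq:prop_term_ing_jf_decrease} keeps $(\bm{x}_{\tau|t+T^\mathrm{s}} - \bm{x}_{\tau|t+T^\mathrm{s}}^\mathrm{r}) \in \mathcal{X}^{\mathrm{t},\mathrm{f}}$ for all $\tau$ in the terminal segment, and then \eqref{eq:prop_term_ing_sys_constraints}–\eqref{eq:prop_term_ing_obst_constraints} give the remaining constraint satisfaction, in particular the terminal constraint \eqref{eq:tmpc_term_set} at time $T$.

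Next I would translate feasibility of \eqref{eq:tmpc} into the closed-loop constraint and collision-avoidance claims. Since the closed-loop input \eqref{eq:closed-loop} equals $\bm{u}_{\tau|t}^*$ on $[0,T^\mathrm{s}]$ and the optimizer satisfies \eqref{eq:tmpc_sys_constraints}, we get $(\bm{x}_t,\bm{u}_t)\in\mathcal{Z}$ for all $t\ge0$; and since $\bm{p}_{\tau|t}^*\in\mathcal{F}_{\tau|t}$ together with Property~\ref{property:obst_constraints}~\ref{property:obst_constraints_construction} ($\mathcal{F}_{\tau|t}\cap(\mathcal{O}\oplus\mathcal{R})=\emptyset$), the closed-loop position never enters the obstacle region, i.e.\ $C\bm{x}_t\notin\mathcal{O}$. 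For the convergence claim I would set up the value function $V(t) \coloneqq \mathcal{J}^\mathrm{f}(\bm{x}_{T|t}^*-\bm{x}_{T|t}^\mathrm{r}) + \int_0^T \mathcal{J}^\mathrm{s}(\bm{x}_{\tau|t}^*,\bm{u}_{\tau|t}^*,\bm{r}_{\tau|t})\,d\tau$ as a Lyapunov function. Evaluating the cost of the candidate at $t+T^\mathrm{s}$ and using \eqref{eq:prop_term_ing_jf_decrease} to bound the terminal-segment contribution, the standard telescoping argument yields $V(t+T^\mathrm{s}) - V(t) \le -\int_{0}^{T^\mathrm{s}}\mathcal{J}^\mathrm{s}(\bm{x}_{\tau|t}^*,\bm{u}_{\tau|t}^*,\bm{r}_{\tau|t})\,d\tau \le -\int_0^{T^\mathrm{s}}\norm{\bm{x}_{\tau|t}^* - \bm{x}_{\tau|t}^\mathrm{r}}_Q^2\,d\tau$, since optimality of $V$ at $t+T^\mathrm{s}$ upper-bounds it by the candidate cost. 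Because $Q\succ0$ and $V$ is bounded below by $0$, summing over sampling instants shows the integrated tracking error is summable, and together with continuity of the dynamics and uniform bounds from compactness of $\mathcal{Z}$ this forces $\norm{\bm{x}_t-\bm{x}_t^\mathrm{r}}\to0$; a Barbalat-type argument handles the continuous-time gaps between samples.

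The main obstacle I anticipate is the careful bookkeeping at the junction between the shifted part and the terminal part of the candidate, and making the reference/obstacle update properties line up with the time shift. Specifically, one must check that the reference trajectory used in the terminal segment at $t+T^\mathrm{s}$ is exactly the one for which $\mathcal{X}^{\mathrm{t},\mathrm{f}}$ was shown invariant — this is where Property~\ref{property:ref_traj}~\ref{property:ref_traj_update} (the reference is consistent under shift by $T^\mathrm{s}$) and the fact that $\bm{x}_{T|t}^*$ landed in the terminal set around $\bm{x}_{T|t}^\mathrm{r}$ must be combined with the continuous-time invariance from \eqref{eq:prop_term_ing_jf_decrease} to conclude $(\bm{x}_{\tau|t+T^\mathrm{s}}-\bm{x}_{\tau|t+T^\mathrm{s}}^\mathrm{r})\in\mathcal{X}^{\mathrm{t},\mathrm{f}}$ over the whole extended interval $[T-T^\mathrm{s},T]$. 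A secondary subtlety is that the PMPC and TMPC run at different rates ($T^{\mathrm{s},\mathrm{p}}=\beta T^{\mathrm{s},\mathrm{t}}$), so the reference seen by the TMPC is piecewise-held between PMPC updates; I would note that Property~\ref{property:ref_traj}~\ref{property:ref_traj_update} is stated with the TMPC sampling time and that Property~\ref{property:obst_constraints}~\ref{property:obst_constraints_update} is stated with the PMPC sampling time precisely so that both the within-PMPC-cycle shifts and the across-PMPC-cycle shift are covered, and defer the detailed interconnection timing to the HMPC-level result (Corollary~\ref{cor:hmpc}). The rest is routine: the constraint-tightening inequalities in Proposition~\ref{prop:term_ing} are exactly what makes the terminal-segment constraints $(\bm{x},\kappa^\mathrm{f}(\bm{x},\bm{r}))\in\mathcal{Z}$ and $C\bm{x}\in\mathcal{F}$ hold whenever the reference satisfies the tightened constraints $\bar{\mathcal{Z}}$ and $\bar{\mathcal{F}}$, which is guaranteed by Property~\ref{property:ref_traj}~\ref{property:ref_traj_construction}.
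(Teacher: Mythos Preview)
Your proposal is correct and follows essentially the same approach as the paper's proof: the same shifted-plus-$\kappa^\mathrm{f}$ candidate, the same use of Proposition~\ref{prop:term_ing} for invariance and constraint satisfaction on the appended segment, and the same value-function decrease leading to a Barbalat argument for asymptotic convergence. If anything, your write-up is slightly more explicit than the paper's in spelling out how feasibility implies closed-loop constraint satisfaction and in flagging the timing bookkeeping between PMPC and TMPC (which the paper defers to Corollary~\ref{cor:hmpc}).
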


The proof is detailed in Appendix~\ref{app:theorem_tmpc}. It uses a candidate solution that shifts the previously optimal solution by $T^{\mathrm{s},\mathrm{t}}$ and appends the terminal control law \eqref{eq:term_control_law}. Feasibility is ensured by the invariant terminal set, cf. Proposition~\ref{prop:term_ing}, and convergence follows by showing that the optimal cost decreases.

\begin{theorem}\label{thm:pmpc}
    Suppose that Assumption~\ref{ass:map} holds and \eqref{eq:pmpc} is initialized with a feasible steady-state plan, i.e. $\dot{\bm{x}}^\mathrm{p} = 0,\ (\bm{x},\bm{u}) \in \bar{\mathcal{Z}},\ \bm{p} \in \bar{\mathcal{F}}_{\cdot|0}$. Then, \eqref{eq:pmpc} is recursively feasible for $t \geq 0$, the optimal solution satisfies Property~\ref{property:ref_traj}, and the constructed obstacle avoidance constraints satisfy Property~\ref{property:obst_constraints}.
\end{theorem}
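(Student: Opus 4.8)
The plan is to argue recursive feasibility of \eqref{eq:pmpc} by an explicit candidate-solution (shift) argument, analogous to the standard MPC recursion but adapted to the ``valid one sampling time in the future'' timing and the terminal steady-state constraint \eqref{eq:pmpc_term_set}. Assume \eqref{eq:pmpc} is feasible at time $t$ with optimal solution $(\bm{x}_{\cdot|t}^*,\bm{u}_{\cdot|t}^*)$. At time $t+T^\mathrm{s}$ I would propose the candidate that (i) on $\tau \in [0,T-T^\mathrm{s}]$ equals the shifted tail $(\bm{x}_{\tau+T^\mathrm{s}|t}^*,\bm{u}_{\tau+T^\mathrm{s}|t}^*)$, and (ii) on $\tau \in [T-T^\mathrm{s},T]$ holds the terminal steady state, i.e.\ $\bm{x}_{\tau|t+T^\mathrm{s}} = \bm{x}_{T|t}^*$, $\bm{u}_{\tau|t+T^\mathrm{s}} = \bm{u}_{T|t}^*$, which is admissible precisely because \eqref{eq:pmpc_term_set} forces $f(\bm{x}_{T|t}^*,\bm{u}_{T|t}^*)=0$. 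I then check the candidate against every constraint of \eqref{eq:pmpc}: the initial-state equality \eqref{eq:pmpc_state_init} holds by construction (the shifted tail at $\tau \in [0,T^\mathrm{s}]$ is exactly $\bm{x}_{\tau+T^\mathrm{s}|t}^*$); the dynamics \eqref{eq:pmpc_state_update} hold on the shifted part by optimality at $t$ and on the appended part because a steady state trivially satisfies $\dot{\bm{x}}=f(\bm{x},\bm{u})=0$; the tightened system constraints \eqref{eq:pmpc_sys_constraints} hold on the shifted part by feasibility at $t$ and on the appended part since $\bm{x}_{T|t}^* \in \bar{\mathcal{X}}$, $\bm{u}_{T|t}^* \in \bar{\mathcal{U}}$; and \eqref{eq:pmpc_term_set} holds for the candidate since its terminal point is again the steady state $(\bm{x}_{T|t}^*,\bm{u}_{T|t}^*)$.

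The main obstacle, and the step requiring the most care, is the obstacle-avoidance constraint \eqref{eq:pmpc_obst_constraints} for the candidate, because between $t$ and $t+T^\mathrm{s}$ the PMPC regenerates the free-space regions $\bar{\mathcal{F}}_{\cdot|t+T^\mathrm{s}}$ around the newly shifted plan via \textit{I-DecompUtil}. Here I would invoke exactly the machinery already established in Section~\ref{subsec:pmpc_obst_constraints}: by Assumption~\ref{ass:map}, the line segments between consecutive optimal PMPC positions $\bm{p}_{iT^\mathrm{s}|t}^*$ stay clear of $\mathcal{O} \oplus \mathcal{R} \oplus \mathcal{X}^{\mathrm{t},\mathrm{f}}$, so the growing-ellipsoid construction yields non-empty overlaps between successive polytopes, giving \eqref{eq:obst_constraints}, i.e.\ $\bm{p}_{\tau+T^\mathrm{s}|t}^* \in \mathcal{F}_{\tau|t+T^\mathrm{s}}$ on $[0,T-T^\mathrm{s}]$ and $\bm{p}_{T|t}^* \in \mathcal{F}_{T|t+T^\mathrm{s}}$. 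This is the untightened statement; I then need to upgrade it to the \emph{tightened} regions $\bar{\mathcal{F}}$. The cleanest route is to observe that the same ellipsoid-growing step can be carried out with the obstacles pre-inflated by $c^\mathrm{o}\alpha$ in addition to $\mathcal{R}$ (this is exactly the tightening shown in Figure~\ref{fig:obst_constraints}(d) and encoded in \eqref{eq:property_ref_traj_obst_constraints_tightened}); Assumption~\ref{ass:map} already reserves this margin, so repeating the overlap argument gives $\bm{p}_{\tau+T^\mathrm{s}|t}^* \in \bar{\mathcal{F}}_{\tau|t+T^\mathrm{s}}$, and on the appended steady-state part the single point $\bm{p}_{T|t}^*$ lies in $\bar{\mathcal{F}}_{T|t+T^\mathrm{s}}$ by the same reasoning. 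Hence the candidate is feasible for \eqref{eq:pmpc} at $t+T^\mathrm{s}$, establishing recursive feasibility by induction from the feasible steady-state initialization at $t=0$.

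It remains to verify that the resulting optimal PMPC solution, taken as the reference $\bm{r} = (\bm{x}^\mathrm{r},\bm{u}^\mathrm{r}) = (\bm{x}^{\mathrm{p},*},\bm{u}^{\mathrm{p},*})$, satisfies Property~\ref{property:ref_traj} and that the generated constraints satisfy Property~\ref{property:obst_constraints}; but this is essentially bookkeeping already done in Section~\ref{sec:pmpc}. Property~\ref{property:ref_traj}~\ref{property:ref_traj_construction} follows termwise: \eqref{eq:property_ref_traj_dynamics} from \eqref{eq:pmpc_state_update}, \eqref{eq:property_ref_traj_sys_constraints} from \eqref{eq:pmpc_sys_constraints}, \eqref{eq:property_ref_traj_obst_constraints} from \eqref{eq:pmpc_obst_constraints}; Property~\ref{property:ref_traj_update}, the continuity/consistency condition $\bm{x}_{\tau+T^\mathrm{s}}^\mathrm{r}=\bm{x}_{\tau+T^\mathrm{s}|t}^\mathrm{r}=\bm{x}_{\tau|t+T^\mathrm{s}}^\mathrm{r}$, is exactly enforced by the initial-state equality \eqref{eq:pmpc_state_init} (the first $T^\mathrm{s}$ of the new plan coincides with the previously optimal one). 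Property~\ref{property:obst_constraints}~\ref{property:obst_constraints_construction} is the non-overlap $\mathcal{F}_{\tau|t}\cap(\mathcal{O}\oplus\mathcal{R})=\emptyset$, which holds because each ellipsoid is grown from free space until it first touches the (inflated) occupied set $\tilde{\mathcal{O}}$; and Property~\ref{property:obst_constraints}~\ref{property:obst_constraints_update} follows from the piecewise-constant definition \eqref{eq:obst_constraints_discrete} together with \eqref{eq:obst_constraints}, which gives $\bm{p}_{T^\mathrm{s}|t}^* \in \mathcal{F}_{0|t}\cap\mathcal{F}_{T^\mathrm{s}|t}$ and the required inclusion $\bm{x}_{\tau|t}^\mathrm{r}\in\bar{\mathcal{F}}_{\tau|t}\Rightarrow\bm{x}_{\tau|t}^\mathrm{r}\in\bar{\mathcal{F}}_{\tau-T^{\mathrm{s},\mathrm{p}}|t+T^{\mathrm{s},\mathrm{p}}}$. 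Collecting these facts completes the proof.
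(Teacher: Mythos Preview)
Your proposal is correct and follows essentially the same approach as the paper's proof: the paper uses the identical shifted-plus-steady-state candidate \eqref{eq:pmpc_candidate} and checks system, obstacle-avoidance, and terminal constraints in the same way, deferring the verification of Properties~\ref{property:ref_traj} and~\ref{property:obst_constraints} to the arguments in Section~\ref{subsec:pmpc_obst_constraints} just as you do. Your treatment is slightly more explicit about the tightened-versus-untightened obstacle regions and about checking \eqref{eq:pmpc_state_init}, but these are elaborations of the same argument rather than a different route.
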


The recursive feasibility proof is detailed in Appendix~\ref{app:theorem_pmpc}. Similar to the proof for Theorem~\ref{thm:tmpc}, it is based on a candidate solution equal to the previously optimal solution, appended with an input ensuring steady state. Given that the previously optimal solution ends in steady state, the candidate also ends in steady state, meaning that system and obstacle avoidance constraints are always satisfied. Satisfaction of Properties~\ref{property:ref_traj} and~\ref{property:obst_constraints} follows from the reasoning explained in Section~\ref{subsec:pmpc_obst_constraints}.

\begin{corollary}\label{cor:hmpc}
    Suppose the terminal ingredients are designed according to Section~\ref{subsec:tmpc_term_ing}, Algorithm~\ref{alg:online} is initialized with a steady-state reference plan satisfying Property~\ref{property:ref_traj}, and \eqref{eq:tmpc} and \eqref{eq:pmpc} are both feasible at $t=0$. Then, \eqref{eq:tmpc} and \eqref{eq:pmpc} are recursively feasible for the closed-loop system in Algorithm~\ref{alg:online} and the closed-loop system satisfies system constraints $(\bm{x}_t,\bm{u}_t) \in \mathcal{Z}$ and avoids obstacle collisions $\mathcal{R} \cap \mathcal{O} = \emptyset$ for $t \geq 0$. Moreover, the tracking error $\norm{\bm{x}_t - \bm{x}_t^\mathrm{r}}$ asymptotically converges to zero.
\end{corollary}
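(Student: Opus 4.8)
The plan is to derive Corollary~\ref{cor:hmpc} by combining Theorem~\ref{thm:tmpc} and Theorem~\ref{thm:pmpc}, with the key conceptual point being that the PMPC feeds the TMPC exactly the objects (reference trajectory $\bm{r}$ and obstacle-free sets $\mathcal{F}$) that Theorem~\ref{thm:tmpc} requires as hypotheses, and this remains true for all $t \geq 0$. First I would invoke Theorem~\ref{thm:pmpc}: since Algorithm~\ref{alg:online} is initialized with a steady-state reference plan satisfying Property~\ref{property:ref_traj} and \eqref{eq:pmpc} is feasible at $t=0$ under Assumption~\ref{ass:map}, the PMPC is recursively feasible, and at every planning instant $t_i$ the optimal PMPC solution $(\bm{x}^{\mathrm{p},*}_{\cdot|t_i},\bm{u}^{\mathrm{p},*}_{\cdot|t_i})$ yields a reference $\bm{r}$ satisfying Property~\ref{property:ref_traj} and obstacle-free sets $\mathcal{F}$ satisfying Property~\ref{property:obst_constraints}. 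In particular, the initial-state constraint \eqref{eq:pmpc_state_init} together with the dynamics \eqref{eq:pmpc_state_update} guarantees that the reference communicated to the TMPC is continuous across planning updates (Property~\ref{property:ref_traj}~\ref{property:ref_traj_update}) and dynamically feasible, and the sub-sampling of the PMPC plan at rate $T^{\mathrm{s},\mathrm{t}}$ (as in Figure~\ref{fig:hmpc_timing}) preserves these properties since the TMPC and PMPC share the same nonlinear model.

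Second, I would invoke Theorem~\ref{thm:tmpc} with these PMPC-generated $\bm{r}$ and $\mathcal{F}$ as the (now verified) hypotheses: the terminal ingredients are designed per Section~\ref{subsec:tmpc_term_ing}, $\bm{r}$ satisfies Property~\ref{property:ref_traj}, $\mathcal{F}$ satisfies Property~\ref{property:obst_constraints}, and \eqref{eq:tmpc} is feasible at $t=0$ by assumption. Theorem~\ref{thm:tmpc} then gives recursive feasibility of \eqref{eq:tmpc}, satisfaction of $(\bm{x}_t,\bm{u}_t)\in\mathcal{Z}$, collision avoidance $C\bm{x}_t \notin \mathcal{O}$ for all $t \geq 0$, and asymptotic convergence $\norm{\bm{x}_t - \bm{x}^\mathrm{r}_t} \to 0$. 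Since Property~\ref{property:obst_constraints}~\ref{property:obst_constraints_construction} states $\mathcal{F}_{\tau|t}\cap(\mathcal{O}\oplus\mathcal{R})=\emptyset$, the condition $C\bm{x}_t \in \mathcal{F}_{0|t}$ from feasibility of \eqref{eq:tmpc} upgrades $C\bm{x}_t\notin\mathcal{O}$ to the full-body statement $\mathcal{R}\cap\mathcal{O}=\emptyset$ (the robot body inflated around $C\bm{x}_t$ stays clear of obstacles), which is the collision-avoidance claim of the corollary. The tracking-convergence claim carries over verbatim from Theorem~\ref{thm:tmpc}.

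The one genuinely non-routine step — and the main obstacle — is verifying that the timing/interconnection scheme of Algorithm~\ref{alg:online} does not break the hypotheses: specifically, that the ``one-sampling-time-in-the-future'' structure (the TMPC optimizes at $t_i - T^{\mathrm{s},\mathrm{t}}$ for validity at $t_i$, using a reference the PMPC began computing at $t_{i-1}-T^{\mathrm{s},\mathrm{t}}$) is consistent with the clean continuous-time statements of Theorems~\ref{thm:tmpc} and~\ref{thm:pmpc}. I would argue this by noting that the forward-prediction step (line 8 of Algorithm~\ref{alg:online}) makes the TMPC initial state coincide with the model response under the previously committed input, so the shifted-candidate argument underlying Theorem~\ref{thm:tmpc} still applies with the delay absorbed into the prediction; and that the PMPC update condition \eqref{eq:property_ref_traj_update} / Property~\ref{property:obst_constraints}~\ref{property:obst_constraints_update} was stated precisely at the PMPC rate $T^{\mathrm{s},\mathrm{p}}=\beta T^{\mathrm{s},\mathrm{t}}$ so that the reference handed to the TMPC at each $t_i$ agrees on the overlap with the one already in use. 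With these consistency checks in place, the corollary is just the conjunction of the two theorems.

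\begin{proof}
By Theorem~\ref{thm:pmpc}, since \eqref{eq:pmpc} is feasible at $t=0$ and Algorithm~\ref{alg:online} is initialized with a steady-state reference plan, \eqref{eq:pmpc} is recursively feasible for all $t\geq 0$, and at each planning time $t_i$ the optimal PMPC solution induces a reference $\bm{r}$ satisfying Property~\ref{property:ref_traj} and obstacle avoidance constraints $\mathcal{F}$ satisfying Property~\ref{property:obst_constraints}. In particular, the continuity condition \eqref{eq:property_ref_traj_update} and the consistency condition in Property~\ref{property:obst_constraints}~\ref{property:obst_constraints_update} hold across PMPC updates, and sub-sampling the PMPC plan at rate $T^{\mathrm{s},\mathrm{t}}$ preserves Properties~\ref{property:ref_traj} and~\ref{property:obst_constraints} because PMPC and TMPC share the nonlinear model \eqref{eq:state_update}; the forward-prediction in Algorithm~\ref{alg:online} ensures the TMPC initial state is the model response to the previously committed input, so the reference remains valid one sampling time ahead as required in Figure~\ref{fig:hmpc_timing}.

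Hence the hypotheses of Theorem~\ref{thm:tmpc} are met for all $t\geq 0$: the terminal ingredients are designed per Section~\ref{subsec:tmpc_term_ing}, $\bm{r}$ satisfies Property~\ref{property:ref_traj}, $\mathcal{F}$ satisfies Property~\ref{property:obst_constraints}, and \eqref{eq:tmpc} is feasible at $t=0$. Theorem~\ref{thm:tmpc} then yields that \eqref{eq:tmpc} is recursively feasible, $(\bm{x}_t,\bm{u}_t)\in\mathcal{Z}$ for all $t\geq 0$, $C\bm{x}_t\notin\mathcal{O}$ for all $t\geq 0$, and $\norm{\bm{x}_t-\bm{x}^\mathrm{r}_t}\to 0$. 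Feasibility of \eqref{eq:tmpc} gives $C\bm{x}_t\in\mathcal{F}_{0|t}$, and Property~\ref{property:obst_constraints}~\ref{property:obst_constraints_construction} gives $\mathcal{F}_{0|t}\cap(\mathcal{O}\oplus\mathcal{R})=\emptyset$; therefore the robot body around $C\bm{x}_t$ does not intersect $\mathcal{O}$, i.e. $\mathcal{R}\cap\mathcal{O}=\emptyset$ for all $t\geq 0$. Combining with recursive feasibility of \eqref{eq:pmpc} from Theorem~\ref{thm:pmpc} completes the proof.
\end{proof}
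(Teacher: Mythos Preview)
Your proof is correct and follows essentially the same approach as the paper: invoke Theorem~\ref{thm:pmpc} to obtain recursive feasibility of the PMPC together with Properties~\ref{property:ref_traj} and~\ref{property:obst_constraints}, then feed these into Theorem~\ref{thm:tmpc} to conclude. You add useful detail the paper omits --- the timing-consistency argument for Algorithm~\ref{alg:online} and the explicit upgrade from $C\bm{x}_t\notin\mathcal{O}$ to the full-body statement $\mathcal{R}\cap\mathcal{O}=\emptyset$ via Property~\ref{property:obst_constraints}~\ref{property:obst_constraints_construction} --- but the logical skeleton is identical.
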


By Theorem~\ref{thm:pmpc}, \eqref{eq:pmpc} is recursively feasible and ensures that the reference trajectory satisfies Property~\ref{property:ref_traj} and obstacle avoidance constraints satisfy Property~\ref{property:obst_constraints}. Then, given the initial feasibility of \eqref{eq:tmpc}, applying Theorem~\ref{thm:tmpc} gives system and obstacle avoidance constraints satisfaction and asymptotic convergence to the reference trajectory of the closed-loop system.\looseness=-1

\subsection{Summary}\label{subsec:hmpc_summary}
We solve SDP~\eqref{eq:sdp} offline for known nonlinear system dynamics \eqref{eq:state_update}. A feasible solution to \eqref{eq:sdp} gives matrices $X$ and $Y$, and tightening constants $c_j^\mathrm{s}$. $X$ and $Y$ are used to compute matrices $P$ and $K$, which are subsequently used to calculate the tightening constants $c^\mathrm{o}$ and to construct the TMPC terminal cost, set, and control law.

Note that LMI \eqref{eq:sdp_lmi_sys_constraints} ensures that $c_j^\mathrm{s}$ are Lipschitz constants, so the terminal set scaling $\alpha$ does not have to known before solving \eqref{eq:sdp} and can instead be computed as $\alpha = \frac{d}{c^\mathrm{o}}$ with $d$ the minimum distance between obstacles and reference trajectory.

Online, we compute the reference trajectory by solving PMPC Problem~\eqref{eq:pmpc} using tightening constants $c_j^\mathrm{s}$ and $c^\mathrm{o}$ and obstacle avoidance constraints generated as described in Section~\ref{subsec:pmpc_obst_constraints}. Theorem~\ref{thm:pmpc} states that the reference trajectory and corresponding constraints satisfy Properties~\ref{property:ref_traj} and~\ref{property:obst_constraints}, respectively, and that the PMPC is recursively feasible by including PMPC terminal constraint \eqref{eq:pmpc_term_set}.

Given the offline-computed terminal ingredients as proposed in Proposition~\ref{prop:term_ing} and online-computed reference trajectory satisfying Property~\ref{property:ref_traj} with corresponding obstacle avoidance constraints satisfying Property~\ref{property:obst_constraints}, Theorem~\ref{thm:tmpc} states that the online execution of TMPC Problem~\ref{eq:tmpc} guarantees asymptotic convergence to the reference trajectory, obstacle avoidance, and recursive feasibility.

Consequently, by the proposed co-design of TMPC and PMPC, Corollary~\ref {cor:hmpc} concludes that the HMPC framework is recursively feasible, satisfies system constraints, and avoids obstacle collisions at all times. Additionally, the timing of PMPC and TMPC as described in Section~\ref{subsec:hmpc_overview} allows sending control commands to the robot at a fixed frequency while maintaining the properties described above.

Important to note is that the optimized trajectory of the PMPC evolves based on the previously optimal PMPC solution, so there is no state feedback from the real system. In contrast, the state feedback is incorporated in the TMPC formulation as an initial state constraint \eqref{eq:tmpc_state_init}.

\section{Results}\label{sec:results}
This section demonstrates an efficient way to generate obstacle avoidance constraints. Furthermore, it shows the performance and advantages of the proposed HMPC framework compared to SMPC, which is commonly used in local motion planning schemes \cite{neunert2016fast,saccani2022multitrajectory,de2021scenario}. To make a fair comparison, SMPC inherits the cost function and system dynamics from PMPC, and initial state, system, and obstacle avoidance constraints from TMPC. This means, it solves \eqref{eq:pmpc}, where \eqref{eq:pmpc_state_init}, \eqref{eq:pmpc_sys_constraints} and \eqref{eq:pmpc_obst_constraints} are replaced by \eqref{eq:tmpc_state_init}, \eqref{eq:tmpc_sys_constraints} and \eqref{eq:tmpc_obst_constraints}, respectively.\looseness=-1

The results include simple simulations without model mismatch, Gazebo simulations with model mismatch, and lab experiments. All results will focus on the lab experiments and highlight the differences with simulations when relevant. The simulations and lab experiments use the same PMPC, TMPC, and SMPC settings, which are chosen to ensure real-time performance on the hardware of the real quadrotor. These settings include sampling times, horizon lengths, and weights.

We first detail the different MPC implementations and test setup in Section~\ref{subsec:results_impl}. After that, we provide the results on both obstacle avoidance constraints generation and comparison between HMPC and SMPC, in Section~\ref{subsec:results_results}.

The open-source implementation to reproduce these results is available at https://github.com/dbenders1/hmpc.

\subsection{Implementation and setup details}\label{subsec:results_impl}
The implementation and setup consider several aspects, including the considered grid map to the quadrotor model, the MPC details, the software setup, and the hardware setup, as described next.

\subsubsection{Grid map}\label{subsubsec:results_impl_map}
The considered grid map $\mathcal{M}$ is 2D and generated offline with size 12x12 m and resolution 0.01 m. The rectangular obstacles are described by their center, width, and length. The obstacle contours, formed by thin lines of occupied grid cells, are efficiently created using OpenCV\footnote{https://github.com/opencv/opencv}. As described in Section~\ref{subsec:pmpc_obst_constraints}, an extra inflation layer is designed around the obstacles to allow smoother planning of the PMPC around the obstacles. The same holds for the lines representing the environmental boundaries. $\mathcal{M}$ is the same for the simulation and lab results below for consistency.

\subsubsection{Quadrotor model}\label{subsubsec:results_impl_model}
The following quadrotor model is used in the MPC schemes:
\begin{align}
    \begin{bmatrix}
        \dot{p}^x\\\dot{p}^y\\\dot{p}^z
    \end{bmatrix} &=
    \begin{bmatrix}
        v^x\\v^y\\v^z
    \end{bmatrix}, \notag\\
    \begin{bmatrix}\dot{v}^x\\\dot{v}^y\\\dot{v}^z\end{bmatrix} &= \begin{bmatrix}s^\phi s^\psi + c^\phi s^\theta c^\psi\\-s^\phi c^\psi + c^\phi s^\theta s^\psi\\c^\phi c^\theta\end{bmatrix} a - \begin{bmatrix}0\\0\\g\end{bmatrix}, \label{eq:f}\\
    \begin{bmatrix}\dot{\phi}\\\dot{\theta}\\\dot{\psi}\\\dot{a}\end{bmatrix} &= \begin{bmatrix}-\frac{1}{\tau^\phi}&0&0&0\\0&-\frac{1}{\tau^\theta}&0&0\\0&0&-\frac{1}{\tau^\psi}&0\\0&0&0&-\frac{1}{\tau^a}\end{bmatrix} \begin{bmatrix}\phi\\\theta\\\psi\\a\end{bmatrix} + \begin{bmatrix}\frac{k^\phi}{\tau^\phi}\\\frac{k^\theta}{\tau^\theta}\\\frac{k^\psi}{\tau^\psi}\\\frac{k^a}{\tau^a}\end{bmatrix} \begin{bmatrix}\phi^\mathrm{c}\\\theta^\mathrm{c}\\\psi^\mathrm{c}\\a^\mathrm{c}\end{bmatrix}, \notag
\end{align}
with states including 3D positions, velocities and attitudes, and acceleration given by $\bm{x} = [p^x\ p^y\ p^z\ v^x\ v^y\ v^z\ \phi\ \theta\ \psi\ a]^\top$ and inputs including 3D attitude commands and collective mass-normalized thrust, or acceleration command, given by $\bm{u} = [\phi^\mathrm{c}\ \theta^\mathrm{c}\ \psi^\mathrm{c}\ a^\mathrm{c}]^\top$, sine ($s$) and cosine ($c$) expressions, gravitational constant $g$, and time and gain constants $\tau^\phi = \tau^\theta = 0.18, \tau^\psi = 0.56, \tau^a = 0.050, k^\phi = k^\theta = k^\psi = k^a = 1$ for roll, pitch, yaw and acceleration respectively.

The system constraints are given by:
\begin{align}
    -15\ \mathrm{m} &\leq p^x,\ p^y &&\leq 15\ \mathrm{m},\notag\\
    0\ \mathrm{m} &\leq p^z &&\leq 4\ \mathrm{m},\notag\\
    -2\ \mathrm{m/s} &\leq v^x,\ v^y,\ v^z &&\leq 2\ \mathrm{m/s},\notag\\
    -30\degree &\leq \phi,\ \theta,\ \psi,\ \phi^\mathrm{c},\ \theta^\mathrm{c},\ \psi^\mathrm{c} &&\leq 30\degree,\notag\\
    5\ \mathrm{m/s^2} &\leq a,\ a^\mathrm{c} &&\leq 15\ \mathrm{m/s^2}.
\end{align}

The PMPC and SMPC leverage an extended model in which the inputs are given by $[\dot{\phi}^\mathrm{c}\ \dot{\theta}^\mathrm{c}\ \dot{\psi}^\mathrm{c}\ a^\mathrm{c}]$, with $-60\degree/s \leq \dot{\phi}^\mathrm{c},\ \dot{\theta}^\mathrm{c},\ \dot{\psi}^\mathrm{c} \leq 60\degree/s$, to allow for penalizing angle rates in the cost function and thus creating a smoother trajectory. The absolute angle commands $[\phi^\mathrm{c}, \theta^\mathrm{c}, \psi^\mathrm{c}]$ are included in the states in that case.\looseness=-1

Both PMPC and TMPC implement the discretization of the quadrotor model \eqref{eq:f} using the fourth-order Runge-Kutta (RK4) integration method with a 50 ms time interval. The TMPC has a sampling time of 50 ms, requiring 1 RK4 step, whereas the PMPC has a sampling time of 500 ms, requiring $\beta = 10$ RK4 steps. Thus, although the sampling times differ between PMPC and TMPC, the model discretization is the same.

\subsubsection{TMPC offline computations}\label{subsubsec:results_impl_offline}
The offline computations described in Section~\ref{sec:tmpc} are implemented using Yalmip. The attitude states occur nonlinearly, and the acceleration state occurs linearly in the linearized system dynamics \eqref{eq:term_ing_A_B}. Therefore, the LMIs are generated using a grid of five points equally divided over the attitude state constraint intervals and the two endpoints of the acceleration constraint interval. The solution to SDP \eqref{eq:sdp} is checked for 21 points per attitude state and two points for the acceleration state constraint interval. All LMIs are satisfied at the checked points in the state space, meaning that the theoretical guarantees hold for those points. The total time to generate the LMIs, solve the SDP, and check the results is 38 s.\looseness=-1

\subsubsection{PMPC and SMPC cost function}\label{subsubsec:results_impl_pmpc_cost}
The PMPC and SMPC aim to reach the goal with a cost similar to GO-MPC \cite{brito2021go}. On a quadrotor, one can control the yaw angle independently from the position, meaning that the goal is described by a 3D position with yaw angle $(\bm{p}^\mathrm{g},\psi^\mathrm{g}) \in \mathbb{R}^4$. Consequently, the cost function is given by:
\begin{align}
    &\mathcal{J}^{\mathrm{s},\mathrm{p}}(\bm{x}_{\tau|t},\bm{u}_{\tau|t},\bm{p}^\mathrm{g},\psi^\mathrm{g})\notag\\
    &\coloneqq w^{\mathrm{xy,g}} \mathcal{H}(\bm{p}_{\tau|t}^{xy}, \bm{p}^{xy,\mathrm{g}}) + w^{\mathrm{z,g}} (p_{\tau|t}^z{-}p^{z,\mathrm{g}})^2 \notag\\
    &+ w^{\psi,\mathrm{c}} (\psi_{\tau|t}{-}\psi^\mathrm{g})^2 + w^a (a_{\tau|t}^2-g)^2 \notag\\
    &+ \norm{\bm{u}_{\tau|t}}_U^2 + w^{\phi\theta,\mathrm{c}} ({\phi_{\tau|t}^\mathrm{c}}^2 + {\theta_{\tau|t}^\mathrm{c}}^2) + w^{\psi,\mathrm{c}} {\psi_{\tau|t}^\mathrm{c}}^2,
\end{align}
with goal position weights $w^{\mathrm{xy,g}}$ and $w^{\mathrm{z,g}}$, goal yaw weight $w^{\psi,\mathrm{g}}$, acceleration weight $w^a$, input weights matrix $U = \operatorname{diag}(u_1, \dots, u_{n^\mathrm{u}})$ and, and input attitude memory state weights $w^{\phi\theta,\mathrm{c}}$ and $w^{\psi,\mathrm{c}}$. Terminal cost $\mathcal{J}^{\mathrm{f},\mathrm{p}}$ has the same expression as stage cost $\mathcal{J}^{\mathrm{s},\mathrm{p}}$, but with different weight values. $\mathcal{H}(\bm{p}_{\tau|t}^{xy}, \bm{p}^{xy,\mathrm{g}})$ denotes the Huber loss \cite{huber1992robust} of the 2D Euclidean distance from position $\bm{p}_{\tau|t}^{xy}$ towards goal position $\bm{p}^{xy,\mathrm{g}}$. The linear relation further away from the origin makes the Huber loss useful to ensure moving towards the goal with approximately constant velocity without accelerating quickly and reaching the goal slowly.

\subsubsection{PMPC and SMPC terminal steady state}\label{subsubsec:results_impl_ss}
The terminal state used to implement \eqref{eq:pmpc_term_set} in both PMPC and SMPC is given by $\{v^x{=}0, v^y{=}0, v^z{=}0, \phi{=}0, \theta{=}0, a{=}g\}$.

\subsubsection{TMPC terminal set}\label{subsubsec:results_impl_alpha}
As Section~\ref{sec:tmpc} shows, the terminal set scaling $\alpha$ is a tuning parameter to trade off PMPC and TMPC performance. In this case, $\alpha$ is computed using $\alpha = \frac{d}{c^\mathrm{o}}$, in which $d = 0.1$ m is the minimum allowed distance from obstacles to reference trajectory and $c^\mathrm{o}$ is given by \eqref{eq:c_o}.

\subsubsection{SMPC slack}\label{subsubsec:results_impl_slack}
Both the Gazebo simulations and lab experiments in Section~\ref{subsubsec:results_results_hmpc} include model mismatch. HMPC indirectly accounts for this by tightening the obstacle avoidance constraints in the PMPC. However, SMPC would become infeasible because the system would exceed the obstacle avoidance constraints. Therefore, the obstacle avoidance constraints in SMPC are implemented using a slack variable, e.g., see \cite{boyd2004convex}, allowing the system to exceed the constraints but with high penalization to steer the system back into the constraints set. Similar to the tightening of constraints in HMPC, SMPC uses a safety distance of 0.1 m to avoid collisions.

\subsubsection{PMPC, TMPC and SMPC settings}\label{subsubsec:results_impl_weights}
Table~\ref{tab:settings} summarizes the PMPC, TMPC, and SMPC settings, including the sampling times, horizon lengths, and weights. The TMPC and SMPC sampling times are chosen sufficiently low to ensure accurate control but long enough to solve the MPC optimization problems in real time. Based on the TMPC sampling time $T^{\mathrm{s},\mathrm{t}}$, the PMPC sampling time is chosen using $\beta = 10$. Given $T^{\mathrm{s},\mathrm{t}}$, this factor is selected sufficiently low for the PMPC to optimize a reasonable plan and sufficiently high for the TMPC to satisfy the terminal set constraint.

\begin{table}[!h]
    \caption{PMPC, TMPC, and SMPC settings in simulations and lab experiments.}
    \centering
    \begin{tabular}{|l|l|l|l|}
        \hline
        \textbf{Method} & $\bm{T}^{\mathrm{s}}$\textbf{ (s)} & $\bm{T}$\textbf{ (s)} & \textbf{Weighting matrices}\\\hline
        PMPC & 0.5 & 2.5 & $w^{\mathrm{xy,g^a}}{=}40, w^{\mathrm{z,g^a}}{=}40, w^{\psi,\mathrm{g^a}}{=}40,$\\
        &&&$w^\mathrm{xy,g^t}{=}200, w^\mathrm{z,g^t}{=}200, w^{\psi,\mathrm{g^t}}{=}200,$\\
        &&&$w^a = 40, w^{\phi\theta,\mathrm{c}} = 16, w^{\psi,\mathrm{c}} = 16,$\\
        &&&$U = \operatorname{diag}(16,16,16,16)$\\\hline
        TMPC & 0.05 & 0.5 & $Q{=}\operatorname{diag}(2e3,2e3,2e3,20,20,20,$\\
        &&&$\hphantom{Q{=}\operatorname{diag}(}100,100,100,100)$,\\
        &&&$R{=}\operatorname{diag}(2e3,2e3,2e3,100)$\\\hline
        SMPC & 0.05 & 0.4 & $w^{\mathrm{xy,g^a}}{=}200, w^{\mathrm{z,g^a}}{=}200, w^{\psi,\mathrm{g^a}}{=}200,$\\
        &&&$w^\mathrm{xy,g^t}{=}2e3, w^\mathrm{z,g^t}{=}2e3, w^{\psi,\mathrm{g^t}}{=}2e3,$\\
        &&&$w^a = 200, w^{\phi\theta,\mathrm{c}} = 160, w^{\psi,\mathrm{c}} = 160,$\\
        &&&$U = \operatorname{diag}(160,160,160,160)$\\\hline
    \end{tabular}
    \label{tab:settings}
\end{table}

The PMPC horizon is chosen long enough to be able to find a reasonable plan through the environments described below. The TMPC horizon is chosen to equal the PMPC sampling time. Furthermore, the SMPC horizon is reduced until the MPC scheme is runtime feasible on the hardware described below.

The GO objective weights used in PMPC and SMPC are chosen such that the terminal stage and other stages contribute equally to the cost function. Note that PMPC and SMPC have a similar weight ratio between the terminal stage and the rest of the horizon for a fair comparison. One of the differences is that the PMPC GO objective stage weights are increased to gain aggressiveness and improve goal-reaching performance, which is not possible for SMPC for stability reasons. Stability is ensured in both PMPC and SMPC by increasing the input weights. Note that this increase is more significant for SMPC than for PMPC.

The TMPC position state and attitude input weights are chosen large enough to ensure that the reference positions are accurately tracked to avoid collisions, and the system will not aggressively compensate for any reference tracking error, thereby increasing model mismatch. To further enhance stable flight, the acceleration input and state weights are increased. Since the reference acceleration is given in the body frame, its value is only valid for attitude values resulting in the corresponding body frame. Therefore, the attitude weights are increased as well.

Since we compute the terminal cost offline, we hard-code the TMPC weighting matrices to save computation time during runtime. In contrast, the PMPC and SMPC weights are not hard-coded to allow the user to change them without generating a new solver.

Note that the costs are given in continuous time and discretized using Euler.

\subsubsection{Software setup}\label{subsubsec:results_impl_software}
In all simulations and lab experiments, the ForcesPro NLP solver \cite{FORCESPro, FORCESNLP} is leveraged, using x86 compilation for simulations and ARM compilation for lab experiments. Furthermore, in both Gazebo simulations and lab experiments, PX4 stable release v1.12.3 is leveraged, using Posix compilation for simulations and NuttX compilation for lab experiments. The advantage of this setup is that both the solver and low-level controllers are the same in simulations and lab experiments to reduce sim-to-real transfer.

\subsubsection{Hardware setup}\label{subsubsec:results_impl_hardware}
The quadrotor used to generate the lab experiment results is the HoverGames drone\footnote{\url{https://nxp.gitbook.io/hovergames/}}, with the flight controller being replaced by a Pixhawk 6X mini and an added NVIDIA Jetson Xavier NX embedded computer, which has a 6-core 1400MHz NVIDIA Carmel ARMv8.2 processor. The position and orientation of the quadrotor are tracked using the Vicon Vantage V5 camera system and sent via Wi-Fi to the embedded computer, where all code runs. Figure~\ref{fig:lab_setup} shows the lab experiment setup.

The simulations are run on a Dell XPS 15 laptop with a 12-core 2.60GHz Intel i7-10750H CPU.

\begin{figure}[!h]
    \centering
    \includegraphics[width=\columnwidth]{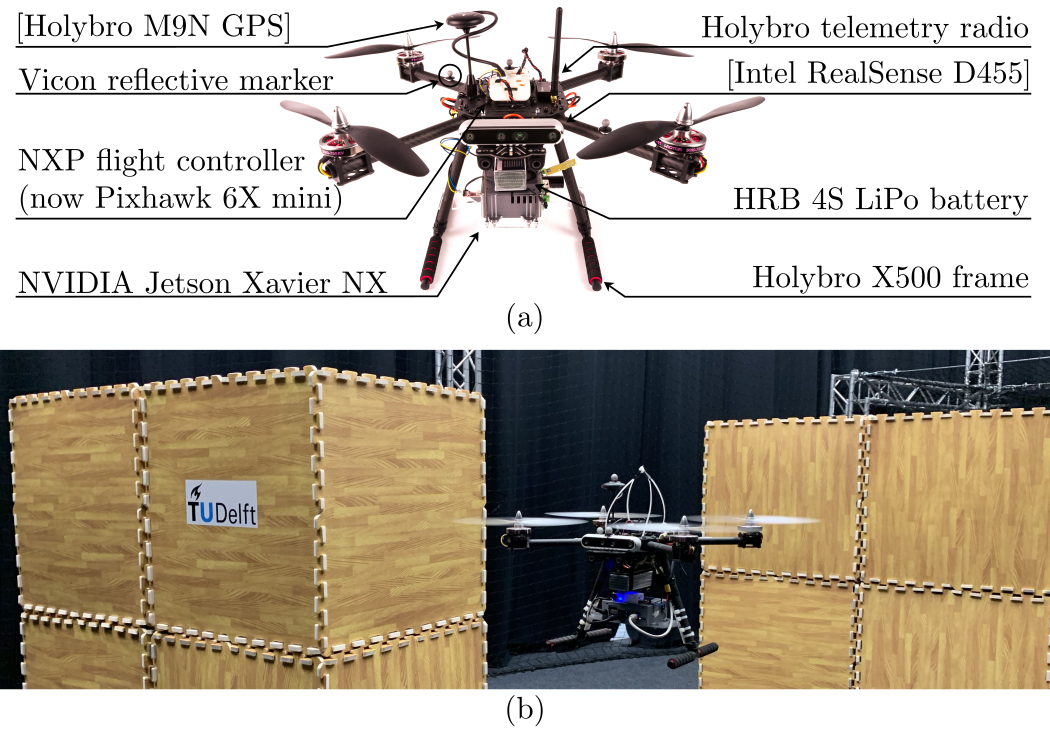}
    \caption{Lab experiment setup. (a) The quadrotor with important components is highlighted. Components between square brackets are not used in this work's experiments. (b) The quadrotor is flying near the obstacles in the lab.}
    \label{fig:lab_setup}
\end{figure}

\subsection{Results}\label{subsec:results_results}
This section presents simple simulation results focusing on the computational efficiency and size of the obstacle avoidance constraint sets in Section~\ref{subsubsec:results_results_idecomp} and the effect of the tuning parameter $\beta$ in Section~\ref{subsubsec:results_results_beta}. Thereafter, it describes the performance of the HMPC framework in lab experiments and simulations in Section~\ref{subsubsec:results_results_hmpc}.

A video of the lab experiments and simulations is available at https://youtu.be/0RnrKk6830I.

\subsubsection{\textit{I-DecompUtil}}\label{subsubsec:results_results_idecomp}
The purpose of this section is twofold. First, it shows how to make the runtime code for obstacle avoidance constraint generation efficient. Second, it demonstrates the impact of the bounding box width on the computation time and the time required to reach the goal.

It is important to note that \textit{DecompUtil} iterates through grid map $\mathcal{M}$ to find all occupied grid cells $\tilde{\mathcal{O}}$ that are contained in the bounding box $\mathcal{B}$ around the line segment. After finding the occupied grid cells, \textit{DecompUtil} iterates through them to compute the constraints for a specific line segment. \textit{I-DecompUtil} repeats this process for all line segments between the stages in the horizon.

The time to compute the obstacle avoidance constraints varies significantly depending on the implementation. Table~\ref{tab:con_time_results} shows the mean and standard deviation of the timing results for the different combinations of two options to reduce computation time: \textbf{C++ compiler optimization}, i.e., setting compiler flag \texttt{-O3} to allow for loop unrolling, and \textbf{map pre-processing}, i.e., selecting only the relevant part corresponding to the line segment to reduce the number of occupied grid cells to iterate through. The results are obtained by moving from start to goal using an MPC with a horizon of 100 stages and a sampling time of 50 ms. A long horizon is chosen so that the impact of other, high-priority processor tasks on the constraints computation time is averaged out.

\begin{table}[h!]
    \centering
    \caption{Mean (standard deviation) of constraint computation times (ms) over a single run from start to goal position.}
    \begin{tabular}{cc|c|c|}
        \cline{3-4}
        &&\multicolumn{2}{c|}{\textbf{Map pre-processing}}\\
        \cline{3-4}
        &&No&Yes\\
        \hline
        \multicolumn{1}{|c|}{\multirow{2}{*}{\textbf{Compiler optimization}}}&No&2312.6 (165.0)&147.7 (69.2)\\
        \cline{2-4}
        \multicolumn{1}{|c|}{}&Yes&6.1 (0.8)&5.5 (1.1)\\
        \hline
    \end{tabular}
    \label{tab:con_time_results}
\end{table}

The results clearly illustrate that compiler optimization is the main factor in reducing constraints computation time, allowing for real-time feasibility. The larger the map, the more significant the impact of map pre-processing becomes, up to the point that map pre-processing becomes necessary to ensure real-time feasibility.

Another factor to consider is the bounding box width tuning parameter. This parameter affects the size of the bounding box $\mathcal{B}$ and, correspondingly, the size of the pre-processed map. The primary purpose of $\mathcal{B}$ is to limit the number of grid cells to consider for constraints computation. As a result, this parameter trades off the speed at which the MPC scheme converges to the goal, or can find a way to the goal at all, and the constraints computation time.

Figure~\ref{fig:sim_con_bbox_timing} shows the computation times for four different bounding box widths, given the scenario with two obstacles described in Section~\ref{subsubsec:results_results_hmpc}. The figure illustrates that the computation time grows with the bounding box width but saturates after a certain width, depending on the obstacle shapes and density in the environment. The variation in timing depends on the environment as well.

\begin{figure}[t]
    \centering
    \includegraphics{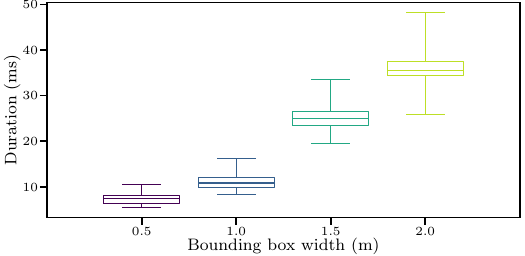}
    \caption{Obstacle avoidance constraints computation times for different bounding box widths when moving from start to goal with an MPC with 100 stages and 0.05 s sampling time.}
    \label{fig:sim_con_bbox_timing}
\end{figure}

Table~\ref{tab:sim_con_bbox_runs} shows the time required to move from start to goal for SMPC and HMPC using different bounding box widths. The SMPC has such a short horizon that the size of the obstacle avoidance constraints does not influence the goal-reaching time. On the other hand, for HMPC, a short bounding box width of 0.5 m limits the goal-reaching time. This effect vanishes in our environment setup for bounding box widths $\{1,1.5,2\}$ m. Instead, the specific shape of the constraint regions and the ability of the PMPC to plan more aggressive maneuvers that cause a slight goal overshooting play a more significant role in the goal-reaching times for these bounding box widths.

\begin{table}[h!]
    \centering
    \caption{Time to reach goal with SMPC and HMPC for different bounding box widths.}
    \begin{tabular}{cc|c|c|c|c|}
        \cline{3-6}
        &&\multicolumn{4}{c|}{$\bm{\mathcal{B}}$ \textbf{width (m)}}\\
        \cline{3-6}
        &&0.5&1&1.5&2\\
        \hline
        \multicolumn{1}{|c|}{\multirow{2}{*}{\textbf{Time to reach goal (s)}}}&SMPC&52.3&52.3&52.3&52.3\\
        \cline{2-6}
        \multicolumn{1}{|c|}{}&HMPC&14.8&6.4&7.3&6.9\\
        \hline
    \end{tabular}
    \label{tab:sim_con_bbox_runs}
\end{table}

The combination of the results in Figure~\ref{fig:sim_con_bbox_timing} and Table~\ref{tab:sim_con_bbox_runs} demonstrates the trade-off above up to 1 m: the smaller the bounding box, the less computation time but the more limited the plan is, and vice versa. Since a bounding box width of 1 m gives a similar goal reaching time to the larger bounding box widths but with lower computation times, this value is used to generate the rest of the results in this section.

It is important to note that the above results are generated using the grid map as presented in Section~\ref{subsubsec:results_results_hmpc}. This grid map is pre-generated with thin obstacle edge lines to make SMPC feasible in real time. However, in experiments with onboard perception, the grid map will contain thicker obstacle edges due to noise in sensor measurements (e.g., from a depth camera). Furthermore, these grid maps are usually constructed in 3D instead of 2D, which will significantly increase the number of occupied grid cells and enlarge the impact of map-preprocessing.

\subsubsection{Effect of $\beta$}\label{subsubsec:results_results_beta}
Next, we demonstrate the impact of $\beta$ on the PMPC. Recall from Remark~\ref{rem:beta} that $\beta$ trades off PMPC execution time and performance. This result is visible in Figure~\ref{fig:sim_beta}: the larger $\beta$, the shorter the PMPC execution time and the longer the goal-reaching time. The longer goal-reaching time is caused by the coarser re-planning time and the fact that the obstacle avoidance constraints are more conservative for a coarser plan, meaning that the PMPC can plan less far ahead in each optimization step.

\begin{figure}[t]
    \centering
    \includegraphics{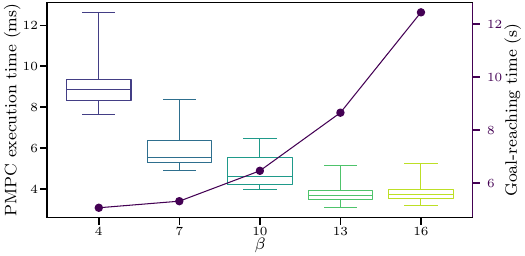}
    \caption{PMPC execution times (boxplots) and goal-reaching times (line) for different values of $\beta$ when moving from start to goal with HMPC. The TMPC is implemented as given in Table~\ref{tab:settings}. Given constant $T^\mathrm{s,t}$, the PMPC is implemented using $T^\mathrm{s,p}=\beta T^\mathrm{s,t}$ with $\beta \in \{4,7,10,13,16\}$. Except for $\beta=16$, $T^\mathrm{p}$ was chosen to be the closest to the value in Table~\ref{tab:settings}. For $\beta=16$, the horizon has to be $T^\mathrm{s,p}$ longer according to Remark~\ref{rem:beta}, which causes a slight increase in computation time compared to $\beta=13$.}
    \label{fig:sim_beta}
\end{figure}

Based on Figure~\ref{fig:sim_beta}, we have set $\beta=10$ in the next section. Note that $\beta=10$ allows for convenient results interpretation since $T^\mathrm{s,p}=T^\mathrm{t}$ given the numbers in Table~\ref{tab:settings}.

\subsubsection{HMPC}\label{subsubsec:results_results_hmpc}
This section describes the performance of HMPC compared to SMPC in lab experiments and highlights differences with simulation results when relevant. Two types of simulations are run: a simple simulation with perfect system knowledge, i.e., the simulated system is also \eqref{eq:f} integrated using RK4 with sampling time $T^{\mathrm{s},\mathrm{t}}$, and one in Gazebo, i.e., including model mismatch.

First, we need to know whether all MPC schemes are feasible on the embedded computer in real-time. Figure~\ref{fig:lab_smpc_hmpc_box} shows the execution times of the control loop and its main components for SMPC, PMPC, and TMPC. PMPC and TMPC comply with execution time limits, whereas SMPC is around its maximum of 50 ms and sometimes exceeds the maximum. This problem cannot easily be avoided on the embedded computer since it has a limited number of threads. The lower the number of threads, the greater the risk of the control loop being interrupted by other operating system tasks. Since most of the SMPC control loops finish in time, the frequency at which control loops are scheduled catches up with the desired frequency after exceeding the limit. Consequently, the results still provide valuable insights.

\begin{figure}[t]
    \centering
    \includegraphics{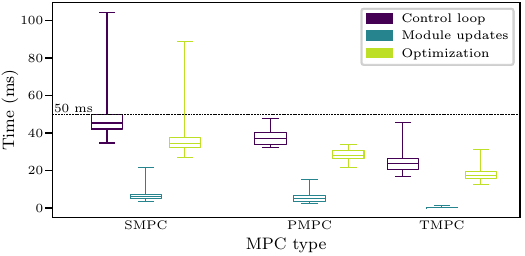}
    \caption{Boxplot with overall execution times for SMPC and HMPC (\textit{Control loop}) and its most important parts: \textit{Module updates} and \textit{Optimization}. The maximum control loop execution time is 50 ms for real-time feasibility of SMPC and TMPC and 500 ms for PMPC. Note that \textit{Module updates} includes constraints generation and loading new cost function terms, and is low compared to the optimization time. Since the TMPC receives the constraints from the PMPC, its \textit{Module updates} computation time is even lower. \textit{Optimization} is the time between calling the solver and obtaining the solution. SMPC contains several control loop cycles exceeding its real-time limit, whereas PMPC and TMPC always finish on time.}
    \label{fig:lab_smpc_hmpc_box}
\end{figure}

To show the impact of the embedded hardware on the execution times, Table~\ref{tab:sim_lab_timing} shows the mean and standard deviation of the control loop execution times for SMPC, PMPC, and TMPC in both simulations and lab experiments. The numbers clearly illustrate a drastic increase in mean and standard deviation of the control loop execution time when moving from simulations to lab experiments with embedded hardware.

\begin{table}[h!]
    \centering
    \caption{Mean (standard deviation) of control loop execution times (ms) of SMPC, PMPC, and TMPC in simple simulations (sim), Gazebo simulations (gaz), and lab experiments using embedded hardware (lab).}
    \begin{tabular}{cc|c|c|c|c|}
        \cline{3-5}
        &&\multicolumn{3}{c|}{\textbf{System}}\\
        \cline{3-5}
        &&sim&gaz&lab\\
        \hline
        \multicolumn{1}{|c|}{\multirow{3}{*}{\textbf{Method}}}&SMPC&8.0 (1.2)&8.4 (1.3)&\textbf{46.8} (7.3)\\
        \cline{2-5}
        \multicolumn{1}{|c|}{}&PMPC&4.9 (0.9)&5.2 (0.9)&\textbf{37.7} (4.2)\\
        \cline{2-5}
        \multicolumn{1}{|c|}{}&TMPC&3.2 (0.6)&3.4 (0.7)&\textbf{24.1} (4.9)\\
        \hline
    \end{tabular}
    \label{tab:sim_lab_timing}
\end{table}

The main limitation of SMPC is the computation time. Therefore, an SMPC with increasing discretization times was also tested in an attempt to reduce the number of stages while keeping the same planning horizon length. However, this scheme is not able to plan a trajectory around an obstacle as the line segments of two stages separated by a relatively large discretization time intersect with the obstacles when moving around them, i.e. Assumption~\ref{ass:map} does not hold.

Note that since the PMPC has a sampling time of 500 ms and only needs 50 ms to solve the considered scenario, it has time left to account for processing a more realistic grid map and optimizing a plan in a more challenging environment.

Given the real-time feasibility of SMPC and HMPC, we can now analyze the schemes' closed-loop properties. Table~\ref{tab:sim_lab_goal_times} summarizes the goal-reaching times of SMPC and HMPC in all simulations and lab experiments. In this case, the goal is reached if the system enters the circle with a radius of 5 cm around the goal in $(p^x,p^y)$-plane. We can draw three conclusions based on the results. First, HMPC can move more efficiently from start to goal. Second, the presence of model mismatch significantly affects the goal-reaching times of SMPC, while HMPC is less sensitive to the presence of model mismatch. Third, the goal-reaching time of SMPC in lab experiments is significantly shorter than in simulations. Compared to simple simulations, the quadrotor moves quicker around the obstacle edges since the combination of model mismatch and slack results in higher accelerations, also in the direction of moving alongside the obstacle. Compared to the Gazebo simulations, the real quadrotor responds more quickly to control commands. This effect becomes more significant when the cost function gives less incentive to increase attitude, and consequently, acceleration. Therefore, the real quadrotor reaches the circle with a radius of 5 cm around the goal faster.

\begin{table}[h!]
    \centering
    \caption{Time (s) to move from start to goal for SMPC and HMPC in simple simulations (sim), Gazebo simulations (gaz), and lab experiments (lab).}
    \begin{tabular}{cc|c|c|c|c|}
        \cline{3-5}
        &&\multicolumn{3}{c|}{\textbf{System}}\\
        \cline{3-5}
        &&sim&gaz&lab\\
        \hline
        \multicolumn{1}{|c|}{\multirow{2}{*}{\textbf{Method}}}&SMPC&52.3&107.5&35.1\\
        \cline{2-5}
        \multicolumn{1}{|c|}{}&HMPC&6.4&8.4&8.1\\
        \hline
    \end{tabular}
    \label{tab:sim_lab_goal_times}
\end{table}

Figure~\ref{fig:lab_smpc_hmpc_traj} shows the 2D closed-loop trajectories of both SMPC and HMPC in the lab experiments. Furthermore, Figure~\ref{fig:lab_pmpc_smpc_costs} displays the corresponding planning costs of SMPC and PMPC.

\begin{figure}[t]
    \centering
    \includegraphics{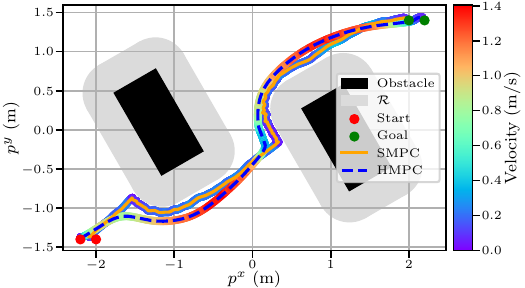}
    \caption{2D closed-loop SMPC and HMPC trajectories with associated velocities in lab experiments. HMPC has a longer planning horizon and results in a smoother trajectory reaching the goal in 8.1 s versus 35.1 s for SMPC. Note that the start and goal positions for SMPC are closer. Otherwise, SMPC cannot find a way around the first encountered obstacle.}
    \label{fig:lab_smpc_hmpc_traj}
\end{figure}

\begin{figure}[t]
    \centering
    \includegraphics{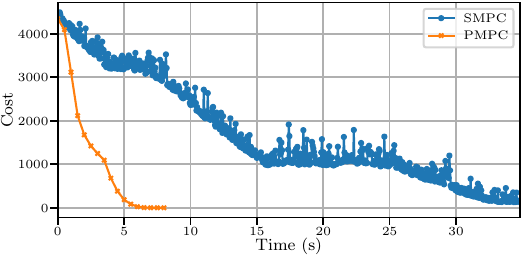}
    \caption{Optimal costs for SMPC and PMPC runs. Due to its longer horizon, PMPC reduces its cost quickly, whereas SMPC takes longer to move around obstacles and is sensitive to model mismatch.}
    \label{fig:lab_pmpc_smpc_costs}
\end{figure}

Related to the SMPC performance, we would like to highlight three aspects.

First of all, the \textbf{SMPC} \textbf{start and goal positions are closer} to each other compared to HMPC. With the original start and goal positions, SMPC gets stuck behind the first encountered obstacle. This is due to the short horizon that ends in steady state and the fact that SMPC is a local method. After moving the start and goal positions, SMPC still struggles to find a way around the obstacles because of its short horizon. As a result, the trajectory slowly rotates around the obstacle corners, and it takes a long time to reach the goal. This is reflected in Figure~\ref{fig:lab_pmpc_smpc_costs} by the `plateaus' in the cost function.

Secondly, SMPC is \textbf{sensitive to model mismatch}. Without constraints softening, a slight disturbance effect causes the system to exceed the constraints, making the optimization problem infeasible. With constraints softening, the system gets aggressively pushed back due to the high slack penalization term in the cost function. However, this increases the distance to the goal, causing the robot to move back towards the constraints, repeating the same process. This is visible in Figure~\ref{fig:lab_smpc_hmpc_traj} by the oscillating trajectory near the obstacles and in Figure~\ref{fig:lab_pmpc_smpc_costs} by the noisy cost values.

To prevent collisions despite this oscillating behavior, the safety margin, as described in Section~\ref{subsubsec:results_impl_slack}, is tuned to 0.1 m. Figure~\ref{fig:lab_smpc_slack} shows the maximum slack value that occurs in the prediction horizon of the different SMPC runs over time. The maximum value is 0.15 m, meaning that the predicted position in at least one of the stages exceeds the safety margin of 0.1 m. This happens in the last stage, not in the first stage, meaning the robot does not collide with the obstacles. However, the SMPC scheme cannot provide strict safety guarantees in general.

\begin{figure}[t]
    \centering
    \includegraphics{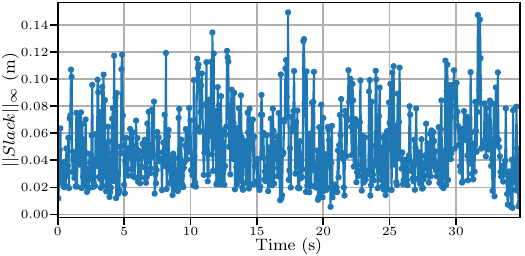}
    \caption{Maximum SMPC slack in the horizon over time. The maximum slack value is around 0.15 m.}
    \label{fig:lab_smpc_slack}
\end{figure}

Finally, SMPC is \textbf{not able to maintain altitude} during flight. This is caused by the fact that the SMPC needs to balance both the planning task, reaching the goal without collisions, and the tracking task, sending control commands to achieve stable flight, in a single formulation with a short horizon and in the presence of model mismatch, especially in thrust dynamics. In simple simulation, the $z$ position varies between 1.39 and 1.41 m, in Gazebo between 1.37 and 1.67 m, and in lab experiments between 0.20 and 1.56 m.

Next, we would like to highlight six aspects related to the performance of \textbf{HMPC}.

First of all, to verify the implementation of the HMPC scheme, we checked the simple simulation results and concluded that the TMPC tracking error was zero at all times. This proves the \textbf{validity of Theorems~\ref{thm:tmpc} and~\ref{thm:pmpc}}.

Secondly, in contrast to SMPC, HMPC gives a similar closed-loop trajectory and similar goal-reaching times in simple simulations, Gazebo simulations, and lab experiments. Therefore, it is \textbf{less sensitive to model mismatch compared to SMPC}.

Thirdly, the PMPC horizon is long enough to \textbf{plan a reasonable trajectory around the obstacles}. This is clearly shown by the quickly decreasing PMPC cost compared to the SMPC cost in Figure~\ref{fig:lab_pmpc_smpc_costs}. Due to its longer horizon, PMPC quickly finds a way towards the goal. On the other hand, SMPC needs time to move around the obstacles. The horizontal parts of the SMPC cost illustrate this effect. Moreover, the SMPC cost is noisier, which is caused by the model mismatch.\looseness=-1

Fourthly, the TMPC has a sufficiently long horizon to \textbf{satisfy the terminal set constraint in every run}, as is clearly illustrated for a particular run between the obstacles in Figure~\ref{fig:lab_tmpc_prediction}.

\begin{figure}[t]
    \centering
    \includegraphics{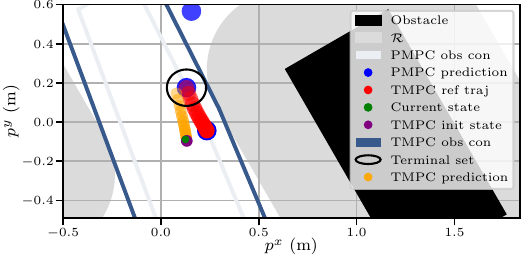}
    \caption{A specific snapshot of the \textit{TMPC prediction} starting from a forward-simulated state (\textit{TMPC init state}) and ending in terminal set \eqref{eq:term_set}, given the obstacles, their collision region indicated by the inflation with the radius of the robot region $\mathcal{R}$, \textit{PMPC prediction} as reference plan and corresponding reference trajectory (\textit{TMPC ref traj}), and obstacle avoidance constraints (\textit{PMPC obs con}). Note that the PMPC obstacle avoidance constraints are tightened with respect to the TMPC obstacle avoidance constraints (\textit{TMPC obs con}). Furthermore, the last measured state (\textit{Current state}) is plotted. The last measured state and forward-simulated state do not overlap, showing the presence of model mismatch.}
    \label{fig:lab_tmpc_prediction}
\end{figure}

Fifthly, HMPC can \textbf{maintain altitude} during flight: in simple simulations, the $z$ position varies between 1.37 and 1.40 m, in Gazebo between 1.39 and 1.42 m, and in lab experiments 1.32 and 1.43 m.

Finally, to show that HMPC \textbf{scales to more complex environments}, Figure~\ref{fig:gaz_hmpc_traj} illustrates the closed-loop trajectory of HMPC in a corridor-like Gazebo environment. Even though HMPC is a local method, it finds its way from start to goal in 8.4 s without colliding with the obstacles. The maximum PMPC and TMPC control loop execution times are 7.0 ms and 5.8 ms, respectively, meaning that HMPC is \textbf{runtime feasible}.

\begin{figure}[t]
    \centering
    \includegraphics{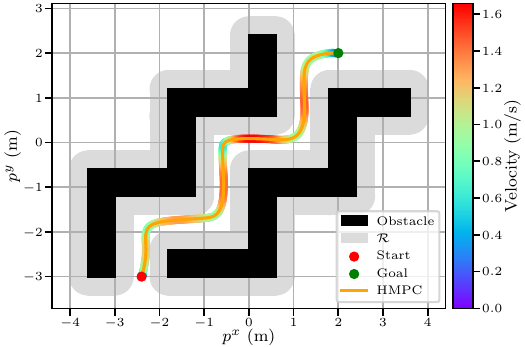}
    \caption{2D closed-loop HMPC trajectory with associated velocity in a more complex Gazebo environment. HMPC can deal with more complex environments and finds its way to the goal in 8.4 s without collisions.}
    \label{fig:gaz_hmpc_traj}
\end{figure}

\section{Conclusions and future work}\label{sec:conclusion}
Optimizing a trajectory complying with nonlinear system dynamics and avoiding collisions is a computationally expensive task. Consequently, it is not feasible to run a single-layer MPC (SMPC) scheme in real time that generates obstacle avoidance constraints and optimizes the corresponding trajectory with a reasonable horizon length on an embedded computer.

To address this problem, we propose a novel hierarchical MPC (HMPC) scheme, which includes the co-design of a planning MPC (PMPC) and tracking MPC (TMPC). In the offline phase, we compute the terminal ingredients of the TMPC. These are also utilized to adjust the constraints in the PMPC during runtime. By ensuring the continuity of the reference plan and consistency in updating the obstacle avoidance constraints, the PMPC can run independently from the current system state and construct a plan with a long time horizon. This saves constraints generation and optimization time on the side of the TMPC. Consequently, the TMPC runs at a higher frequency, resulting in accurate tracking of the reference plan.

We compared HMPC to SMPC in simulations without model mismatch, Gazebo simulations, and lab experiments. In general, the prediction horizon of SMPC needs to be reduced significantly to achieve real-time feasibility in the lab experiments. Therefore, SMPC has difficulty finding a way to the goal, needs constraints softening to remain feasible, and deviates significantly from the desired altitude. On the other hand, HMPC optimizes a smoother plan, does not require constraints softening, and maintains altitude more accurately.

To summarize, HMPC is more computationally efficient and less sensitive to model mismatch than SMPC. HMPC also provides tracking and recursive feasibility guarantees. It simplifies the task for robotic practitioners so they can leverage off-the-shelf nonlinear solvers for motion planning and tracking without having to optimize code, linearize models, or take care of safety themselves. Furthermore, due to its computational efficiency, HMPC can easily be deployed on different robotic platforms.

While the experiments and simulations demonstrate collision avoidance and recursive feasibility, the derived theoretical guarantees are only valid in the absence of model mismatch. Our next step is to extend the TMPC to a \textit{robust} formulation that can explicitly leverage pre-determined bounds on the model mismatch, similar to the ideas in \cite{kohler2020computationally}. Lastly, we would like to move away from motion capture cameras to onboard sensors, such as LiDAR or depth cameras, for obstacle detection and 3D constraint generation, e.g., in outdoor scenarios.

{\appendices

\section{Proof Theorem~\ref{thm:tmpc}}\label{app:theorem_tmpc}
\begin{proof}
    The proof consists of three parts. \textbf{Part I} proposes the previously optimal solution shifted by $T^\mathrm{s}=T^{\mathrm{s},\mathrm{t}}$, appended with the state after applying $\kappa^\mathrm{f}$, as candidate solution with horizon $T=T^\mathrm{t}$ for the next TMPC run. \textbf{Part II} shows that all system and obstacle constraints are satisfied for the candidate solution. Finally, \textbf{Part III} shows asymptotic convergence of the state to the reference trajectory.

    \textbf{Part I. Candidate solution}\\
    For convenience, define for $\tau \in [T,T+T^\mathrm{s}]$:
    \begin{equation}\label{eq:tmpc_candidate_term_u}
        \bm{u}_{\tau|t}^* \coloneqq \kappa^\mathrm{f}(\bm{x}_{\tau|t}^*,\bm{r}_{\tau|t}^\mathrm{r}),\\
    \end{equation}
    and $\bm{x}_{\tau|t}^*$ according to \eqref{eq:tmpc_state_update} by applying \eqref{eq:tmpc_candidate_term_u}.

    Consider the following candidate solution:
    \begin{equation}\label{eq:tmpc_candidate}
        \tilde{\bm{u}}_{\tau|t+T^\mathrm{s}} = \bm{u}_{\tau+T^\mathrm{s}|t}^*, \ \tilde{\bm{x}}_{\tau|t+T^\mathrm{s}} = \bm{x}_{\tau+T^\mathrm{s}|t}^*, \ \tau \in [0,T],
    \end{equation}
    with the reference trajectory defined by \eqref{eq:property_ref_traj_update} in Property~\ref{property:ref_traj}.

    \textbf{Part II. Recursive feasibility}\\
    \textit{Part II-I. System constraints satisfaction} For $\tau \in [0,T-T^\mathrm{s}), j \in \mathbb{N}_{[1,n^\mathrm{s}]}$, the system constraints given the candidate solution satisfy:
    \begin{equation}
        g_{j}^\mathrm{s}(\tilde{\bm{x}}_{\tau|t+T^\mathrm{s}},\tilde{\bm{u}}_{\tau|t+T^\mathrm{s}}) \overset{\eqref{eq:tmpc_candidate}}{=} g_{j}^\mathrm{s}(\bm{x}_{\tau+T^\mathrm{s}|t}^*,\bm{u}_{\tau+T^\mathrm{s}|t}^*) \overset{\eqref{eq:tmpc_sys_constraints}}{\leq} 0.
    \end{equation}

    \textit{Part II-II. Obstacle avoidance} Similarly, by Property~\ref{property:obst_constraints}~\ref{property:obst_constraints_update}, for $\tau \in (0,T-T^\mathrm{s}], j \in \mathbb{N}_{[1,n^\mathrm{o}]}$, the obstacle avoidance constraints given the candidate satisfy:
    \begin{equation}\label{eq:tmpc_proof_obst_constraints}
        \begin{aligned}
            &g_{j, \tau+T^\mathrm{s}|t}^\mathrm{o}(\bm{p}_{\tau+T^\mathrm{s}|t}^*) \overset{\eqref{eq:tmpc_obst_constraints}\eqref{eq:obst_constraints}}{\leq} 0\\
            \overset{\eqref{eq:tmpc_candidate}}{\implies} &g_{j, \tau|t+T^\mathrm{s}}^\mathrm{o}(\tilde{\bm{p}}_{\tau|t+T^\mathrm{s}}) \leq 0,
        \end{aligned}
    \end{equation}
    thus ensuring obstacle avoidance by Property~\ref{property:obst_constraints}~\ref{property:obst_constraints_construction}.

    \textit{Part II-III. Terminal constraints satisfaction} Given candidate \eqref{eq:tmpc_candidate} and reference \eqref{eq:property_ref_traj_update}, the following holds for $\tau \in [T-T^\mathrm{s},T]$:
    \begin{align}
        \mathcal{J}^\mathrm{f,t}(\tilde{\bm{x}}_{\tau|t+T^\mathrm{s}},\bm{x}_{\tau|t+T^\mathrm{s}}^\mathrm{r}) \overset{\eqref{eq:tmpc_candidate}}{=} &\mathcal{J}^\mathrm{f,t}(\bm{x}_{\tau+T^\mathrm{s}|t}^*,\bm{x}_{\tau|t+T^\mathrm{s}}^\mathrm{r}) \notag\\
        \overset{\eqref{eq:prop_term_ing_jf_decrease}}{\leq} &\mathcal{J}^\mathrm{f,t}(\bm{x}_{T|t}^*,\bm{x}_{T|t}^\mathrm{r}) \overset{\eqref{eq:tmpc_term_set}}{\leq} \alpha^2,
    \end{align}
    i.e. the terminal set $(\tilde{\bm{x}}_{\tau|t+T^\mathrm{s}}{-}\bm{x}_{\tau|t+T^\mathrm{s}}^\mathrm{r}) \in \mathcal{X}^{\mathrm{t},\mathrm{f}}$ is positive invariant. Therefore, by Proposition~\ref{prop:term_ing}, the following holds for $\tau \in [T-T^\mathrm{s},T]$:
    \begin{align}
        g_{j}^\mathrm{s}(\tilde{\bm{x}}_{\tau|t+T^\mathrm{s}},\tilde{\bm{u}}_{\tau|t+T^\mathrm{s}}) \overset{\eqref{eq:prop_term_ing_sys_constraints}}{\leq} &0,\ &&j \in \mathbb{N}_{[1,n^\mathrm{s}]},\\
        g_{j, \tau|t+T^\mathrm{s}}^\mathrm{o}(\tilde{\bm{p}}_{\tau|t+T^\mathrm{s}}) \overset{\eqref{eq:prop_term_ing_obst_constraints}}{\leq} &0,\ &&j \in \mathbb{N}_{[1,n^\mathrm{o}]}.
    \end{align}

    \textbf{Part III. Asymptotic convergence}\\
    Asymptotic convergence is shown using Barbalat's Lemma. By \eqref{eq:prop_term_ing_jf_decrease}, the following holds:
    \begin{align}\label{eq:proof_jf_decrease}
        &\mathcal{J}^\mathrm{f,t}(\tilde{\bm{x}}_{T|t+T^\mathrm{s}}{-}\bm{x}_{T|t+T^\mathrm{s}}^\mathrm{r}) - \mathcal{J}^\mathrm{f,t}(\bm{x}_{T|t}^*{-}\bm{x}_{T|t}^\mathrm{r}) \notag\\&+ \int_{\tau=T-T^\mathrm{s}}^T \mathcal{J}^\mathrm{s,t}(\bm{x}_{\tau+T^\mathrm{s}|t}^*,\bm{x}_{\tau+T^\mathrm{s}|t}^*,\bm{r}_{\tau+T^\mathrm{s}|t}) d\tau \leq 0.
    \end{align}
    Therefore, the following inequality is established:
    \begin{align}
        &\mathrlap{\mathcal{J}^{*,\mathrm{t}}(\bm{x}_{t+T^\mathrm{s}},\bm{x}_{\cdot|t+T^\mathrm{s}}^\mathrm{r})}&& \notag\\
        &\leq &&\int_{\tau=0}^T \mathcal{J}^\mathrm{s,t}(\tilde{\bm{x}}_{\tau|t+T^\mathrm{s}},\tilde{\bm{u}}_{\tau|t+T^\mathrm{s}},\bm{r}_{\tau|t+T^\mathrm{s}}) d\tau \notag\\&&&+\mathcal{J}^\mathrm{f,t}(\tilde{\bm{x}}_{T|t+T^\mathrm{s}}{-}\bm{x}_{T|t+T^\mathrm{s}}^\mathrm{r}) \notag\\
        &\overset{\eqref{eq:proof_jf_decrease}}{\leq} &&\mathcal{J}^{*,\mathrm{t}}(\bm{x}_t,\bm{x}_{\cdot|t}^\mathrm{r}) - \int_{\tau=0}^{T^\mathrm{s}} \mathcal{J}^\mathrm{s,t}(\bm{x}_{\tau|t}^*,\bm{x}_{\tau|t}^*,\bm{r}_{\tau|t}) d\tau,
    \end{align}
    which proves:
    \begin{multline}\label{eq:proof_convergence}
        \mathcal{J}^{*,\mathrm{t}}(\bm{x}_{t+T^\mathrm{s}},\bm{x}_{\cdot|t+T^\mathrm{s}}^\mathrm{r}) - \mathcal{J}^{*,\mathrm{t}}(\bm{x}_t,\bm{x}_{\cdot|t}^\mathrm{r})\\ \leq -c^{\mathcal{J},\mathrm{d}} \int_{\tau=t}^{t+T^\mathrm{s}} \norm{\bm{x}_t - \bm{x}_t^\mathrm{r}}_Q^2 d\tau,
    \end{multline}
    using $Q,R \succ 0$ with $c^{\mathcal{J},\mathrm{d}} > 0$.
    Iterating this inequality and using the fact that $\mathcal{J}^{*,\mathrm{t}}(\bm{x}_t,\bm{x}_{\cdot|t}^\mathrm{r})$ is uniformly bounded yields:
    \begin{align}
        \lim_{t\rightarrow\infty}\int_{0}^t \|\bm{x}_\tau-\bm{x}_\tau^{\mathrm{r}}\|_Q^2d\tau<\infty.
    \end{align}
    Asymptotic convergence follows by invoking Barbalat's Lemma.
\end{proof}

\section{Proof Theorem~\ref{thm:pmpc}}\label{app:theorem_pmpc}
\begin{proof}
    The proof consists of two parts. \textbf{Part I} proposes the previously optimal solution shifted by $T^\mathrm{s}=T^{\mathrm{s},\mathrm{p}}$, appended with an input ensuring steady state, as candidate solution with horizon $T=T^\mathrm{p}$ for the next PMPC run. \textbf{Part II} shows feasibility of the candidate solution given the previously optimal solution.

    \textbf{Part I. Candidate solution}\\
    Consider the following candidate solution, valid by \eqref{eq:pmpc_term_set}:
    \begin{subequations}\label{eq:pmpc_candidate}
        \begin{align}
            \tilde{\bm{u}}_{\tau|t+T^\mathrm{s}} = 
            \begin{cases}
                \bm{u}_{\tau+T^\mathrm{s}|t}^*, &\tau \in [0,T-T^\mathrm{s}),\\
                \bm{u}_{T|t}^*, &\tau \in [T-T^\mathrm{s},T],
            \end{cases} \label{eq:pmpc_candidate_u}\\
            \tilde{\bm{x}}_{\tau|t+T^\mathrm{s}} = 
            \begin{cases}
                \bm{x}_{\tau+T^\mathrm{s}|t}^*, &\tau \in [0,T-T^\mathrm{s}),\\
                \bm{x}_{T|t}^*, &\tau \in [T-T^\mathrm{s},T].
            \end{cases} \label{eq:pmpc_candidate_x}
        \end{align}
    \end{subequations}

    \textbf{Part II. Recursive feasibility}\\
    \textit{Part II-I. System constraints satisfaction} For $\tau \in [0,T-T^\mathrm{s}), j \in \mathbb{N}_{[1,n^\mathrm{s}]}$, the system constraints given the candidate solution are:
    \begin{equation}\label{eq:pmpc_candidate_sys_constraints}
        g_{j}^\mathrm{s}(\tilde{\bm{x}}_{\tau|t+T^\mathrm{s}},\tilde{\bm{u}}_{\tau|t+T^\mathrm{s}}) \overset{\eqref{eq:pmpc_candidate}}{=} g_{j}^\mathrm{s}(\bm{x}_{\tau+T^\mathrm{s}|t}^*,\bm{u}_{\tau+T^\mathrm{s}|t}^*) \overset{\eqref{eq:pmpc_sys_constraints}}{\leq} 0.
    \end{equation}

    \textit{Part II-II. Obstacle avoidance constraints satisfaction} Similarly, by Property~\ref{property:obst_constraints}~\ref{property:obst_constraints_update} for $\tau \in (0,T-T^\mathrm{s}], j \in \mathbb{N}_{[1,n^\mathrm{o}]}$, the obstacle avoidance constraints given the candidate satisfy:
    \begin{equation}\label{eq:pmpc_candidate_obst_constraints}
        \begin{aligned}
            &g_{j, \tau+T^\mathrm{s}|t}^\mathrm{o}(\bm{p}_{\tau+T^\mathrm{s}|t}^*) \overset{\eqref{eq:pmpc_obst_constraints}\eqref{eq:obst_constraints}\eqref{eq:property_ref_traj_obst_constraints_tightened}}{\leq} 0\\
            \overset{\eqref{eq:pmpc_candidate_x}}{\implies} &g_{j, \tau|t+T^\mathrm{s}}^\mathrm{o}(\tilde{\bm{p}}_{\tau|t+T^\mathrm{s}}) \leq 0.
        \end{aligned}
    \end{equation}

    \textit{Part II-III. Terminal constraints satisfaction} Given the steady-state condition in candidate \eqref{eq:pmpc_candidate}, \eqref{eq:pmpc_candidate_sys_constraints} and \eqref{eq:pmpc_candidate_obst_constraints} hold for $\tau \in [T-T^\mathrm{s},T]$ as well.
\end{proof}

}

\section*{Acknowledgments}
The authors would like to thank Chris Pek for helping to create Figure~\ref{fig:smpc_hmpc_front}, and Clarence Chen, Johanna Probst, and Simone Carena for helping to take the pictures in Figure~\ref{fig:lab_setup}.

\bibliographystyle{IEEEtran}
\bibliography{IEEEabrv,mybib.bib}

\begin{IEEEbiography}[{\includegraphics[width=1in,height=1.25in,clip,keepaspectratio]{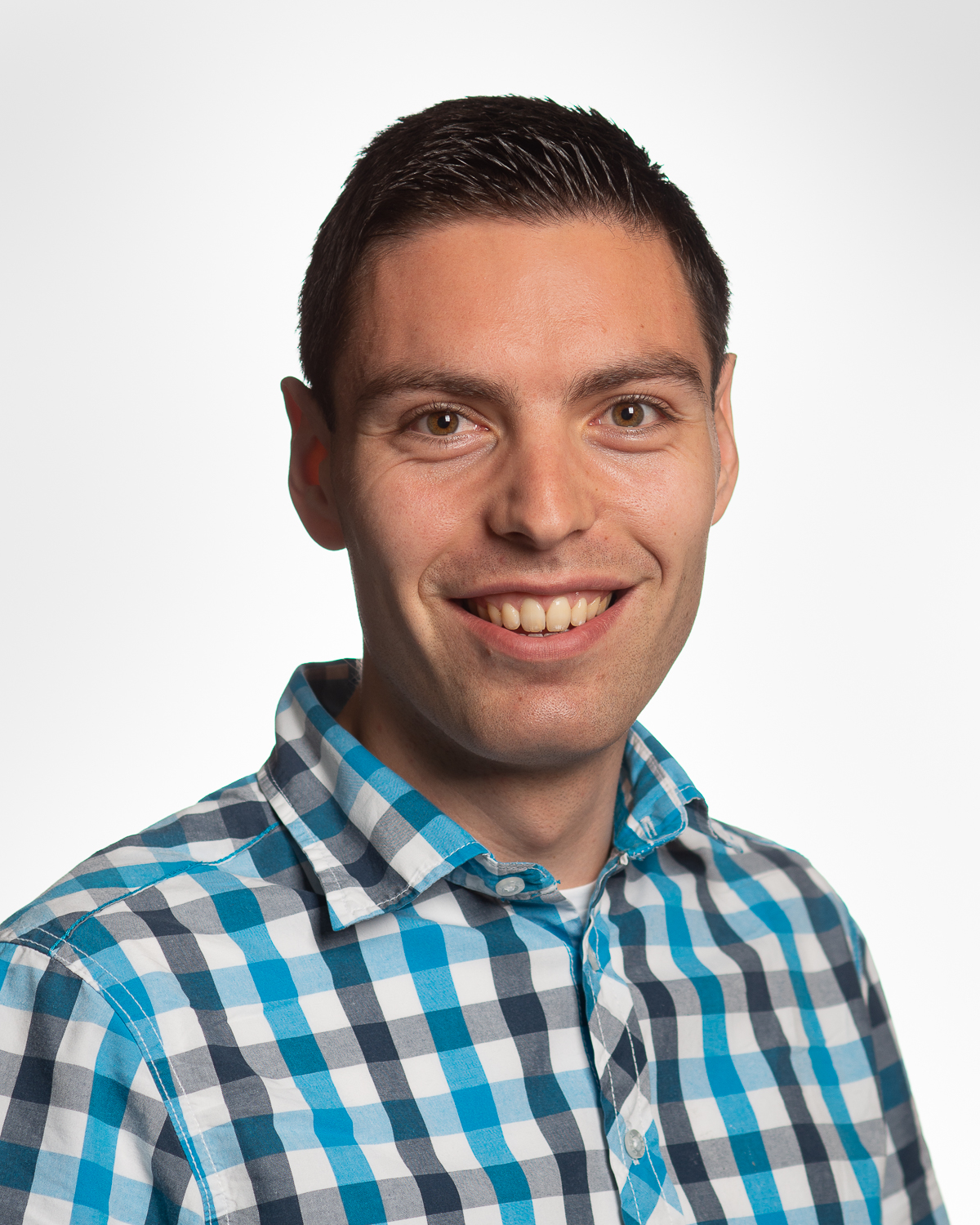}}]{Dennis Benders} received the B.Sc. degree in Electrical Engineering and M.Sc. degree in Embedded Systems from Delft University of Technology, Delft, The Netherlands, in 2018 and 2020, respectively. He is currently working towards a Ph.D. degree in robotics. He is part of the Reliable Robot Control Lab and Autonomous Multi-Robots Lab at Delft University of Technology. His research interests include motion planning and control for autonomous mobile robots, with special emphasis on real-time and open-source code.
\end{IEEEbiography}

\begin{IEEEbiography}[{\includegraphics[width=1in,height=1.25in,clip,keepaspectratio]{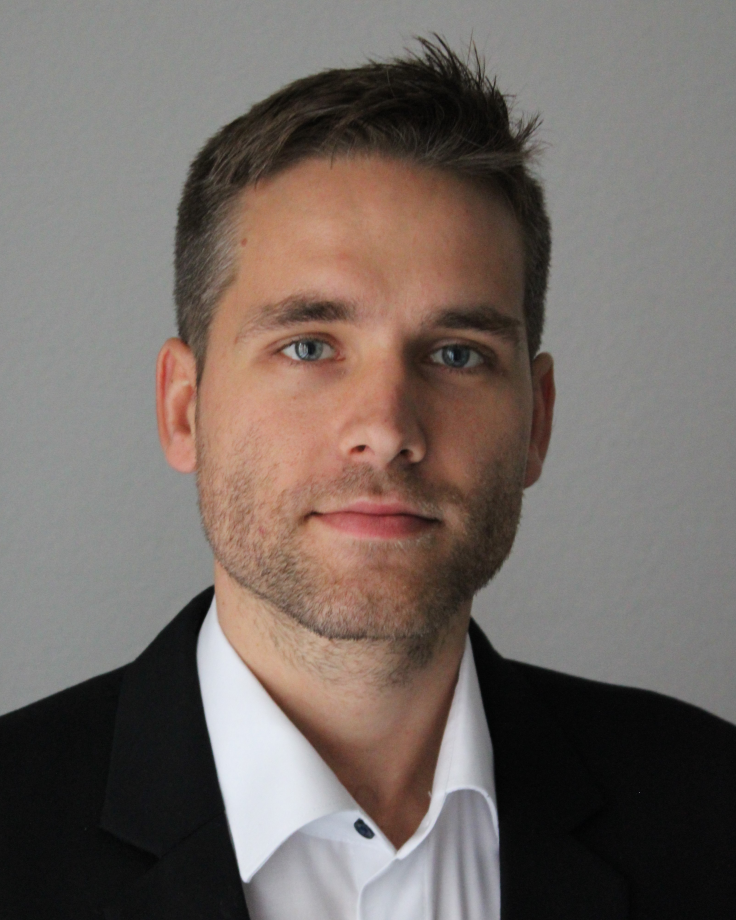}}]{Johannes K\"ohler} received the Ph.D. degree from the University of Stuttgart, Germany, in 2021. He is currently a postdoctoral researcher at ETH Z\"{u}rich, Switzerland. His research interests are in the area of model predictive control and control and estimation for nonlinear uncertain systems. Dr. K\"{o}hler is the recipient of the 2021 European Systems \& Control PhD Thesis Award, the IEEE CSS George S. Axelby Outstanding Paper Award 2022, and the Journal of Process Control Paper Award 2023.
\end{IEEEbiography}

\begin{IEEEbiography}[{\includegraphics[width=1in,height=1.25in,clip,keepaspectratio]{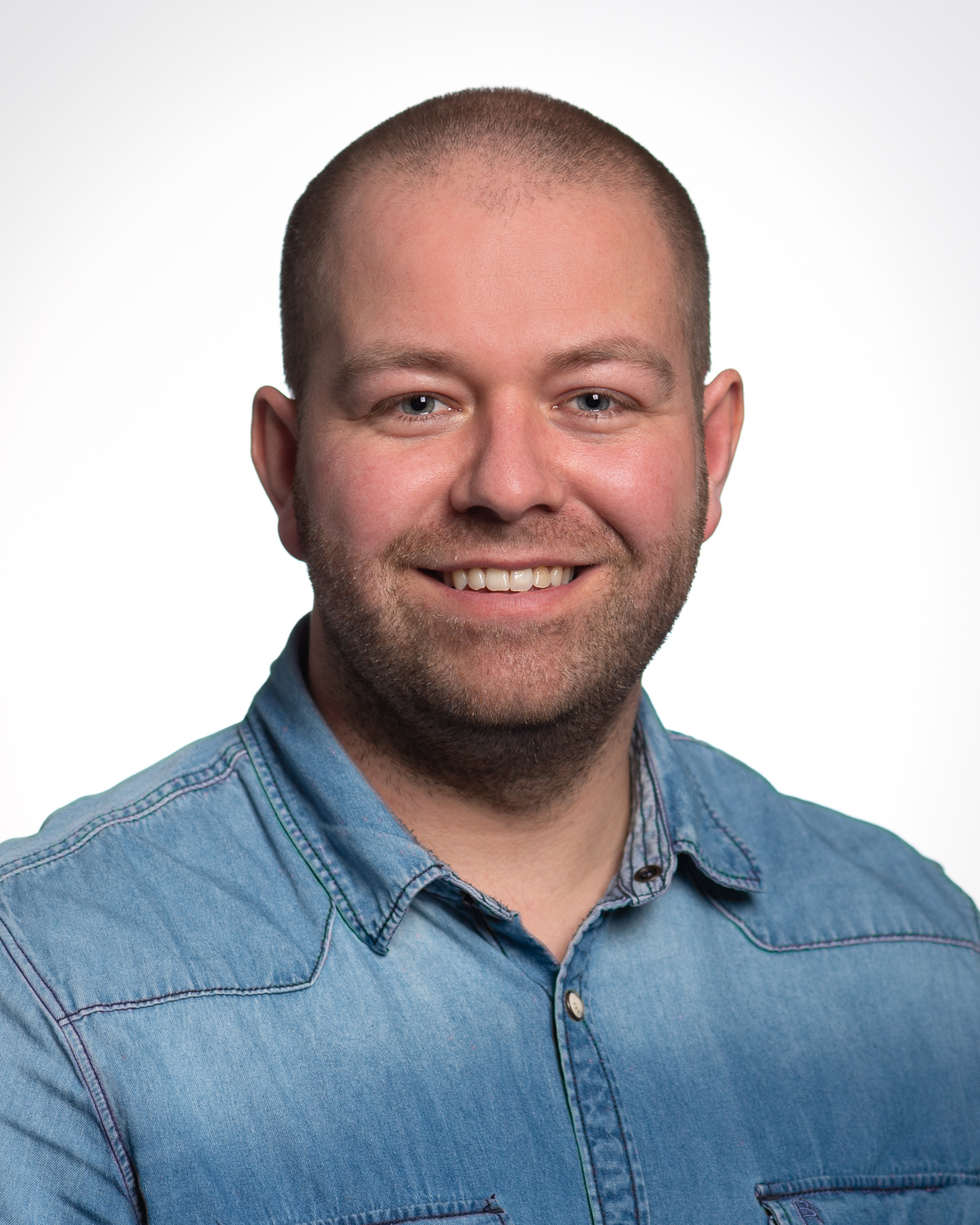}}]{Thijs Niesten} received the B.Sc. degree and M.Sc. degree in Mechanical Engineering from Delft University of Technology, Delft, The Netherlands, in 2018 and 2022, respectively. He is currently working as a research engineer in robotics. He is part of the Reliable Robot Control Lab and Autonomous Multi-Robots Lab at Delft University of Technology. His research interests include computer vision, motion planning, and control for autonomous mobile robots.
\end{IEEEbiography}

\begin{IEEEbiography}[{\includegraphics[width=1in,height=1.25in,clip,keepaspectratio]{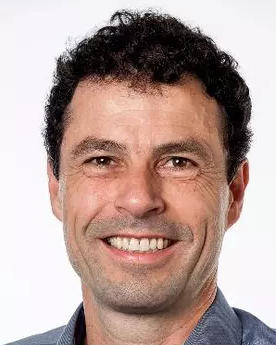}}]{Robert Babu{\v s}ka} received the M.Sc. (Hons.) degree from the Czech Technical University in Prague, in 1990, and the Ph.D. (cum laude) degree from Delft University of Technology, the Netherlands, in 1997. He is a full professor at TU Delft, Faculty of Mechanical Engineering, Department of Cognitive Robotics, and a distinguished researcher at Czech Technical University in Prague, CIIRC, Department of AI. His research interests include reinforcement learning, adaptive and learning robot control, and nonlinear system identification. He has been involved in the applications of these techniques in various fields, ranging from process control to robotics and aerospace.
\end{IEEEbiography}

\begin{IEEEbiography}[{\includegraphics[width=1in,height=1.25in,clip,keepaspectratio]{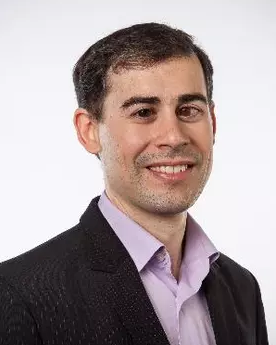}}]{Javier Alonso-Mora} is an associate professor at the Cognitive Robotics department of the Delft University of Technology, where he leads the Autonomous Multi-Robots Lab. Dr. Alonso-Mora was a postdoctoral associate at the Massachusetts Institute of Technology (MIT). He received his Ph.D. degree in robotics from ETH Z\"urich. His research focuses on navigation, motion planning, learning, and control of autonomous mobile robots and teams, interacting with humans in dynamic, uncertain environments. He is the recipient of multiple awards, including the IEEE T-ASE Best Paper Award (2024), an ERC Starting Grant (2021), and the ICRA Best Paper Award on Multi-Robot Systems (2019).
\end{IEEEbiography}

\begin{IEEEbiography}[{\includegraphics[width=1in,height=1.25in,clip,keepaspectratio]{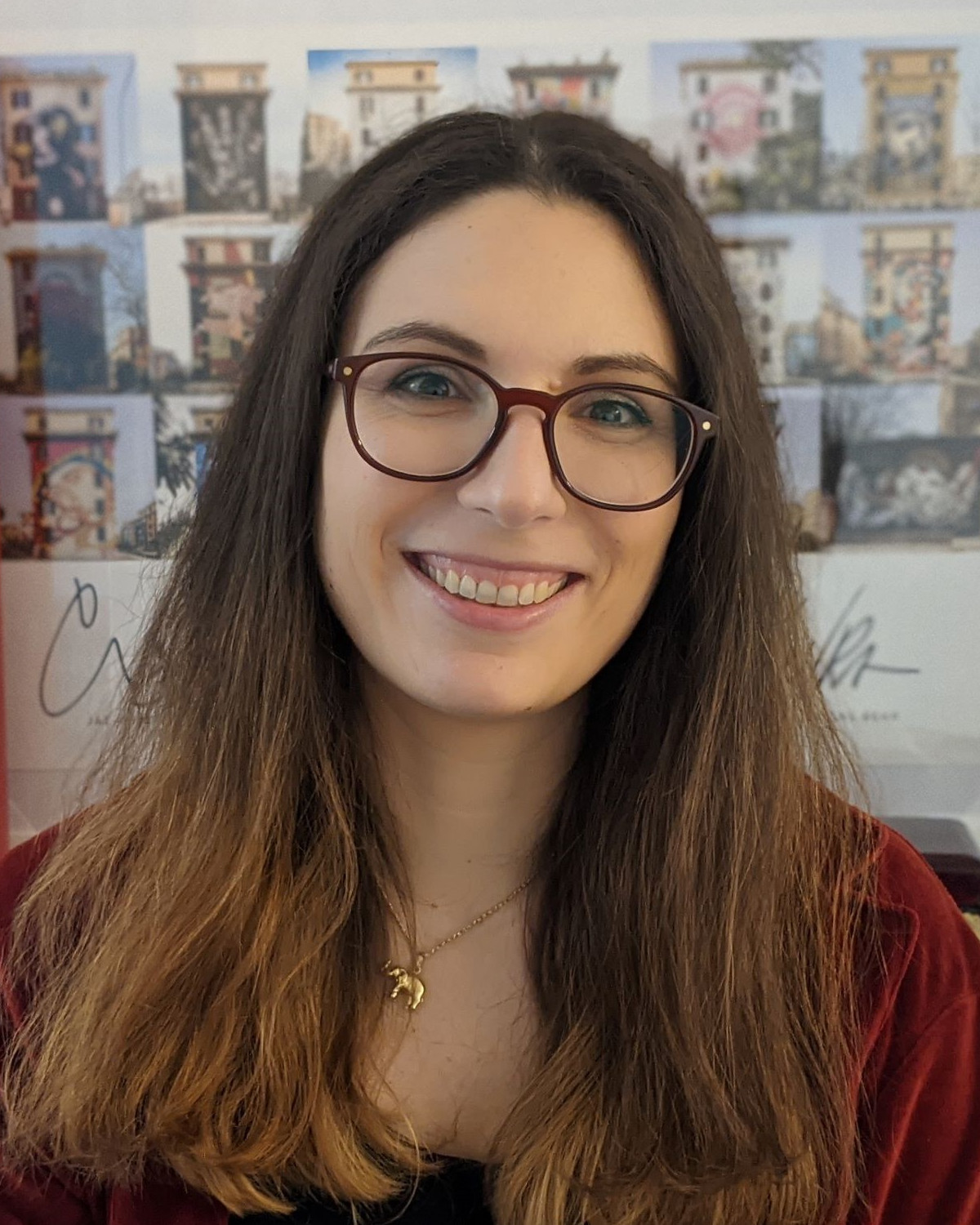}}]{Laura Ferranti} received her Ph.D. degree from Delft University of Technology, Delft, The Netherlands, in 2017. She is currently an assistant professor in the Cognitive Robotics department, Delft University of Technology, Delft, The Netherlands. She is the recipient of an NWO Veni Grant from the Netherlands Organization for Scientific Research (2020) and of the Best Paper Award in Multi-robot Systems at ICRA 2019. Her research interests include optimization and optimal control, model predictive control, reinforcement learning, and embedded optimization-based control with applications in flight control, maritime transportation, robotics, and automotive.
\end{IEEEbiography}

\vfill

\end{document}